\def\eqref#1{equation~\ref{#1}}
\def\1{\bm{1}}
\def\vtheta{{\bm{\theta}}}
\def\vf{{\bm{f}}}
\def\vg{{\bm{g}}}
\def\vtheta{{\bm{\theta}}}
\def\mD{{\bm{D}}}
\def\mF{{\bm{F}}}
\def\mI{{\bm{I}}}
\def\mT{{\bm{T}}}
\def\mW{{\bm{W}}}
\DeclareMathAlphabet{\mathsfit}{\encodingdefault}{\sfdefault}{m}{sl}
\SetMathAlphabet{\mathsfit}{bold}{\encodingdefault}{\sfdefault}{bx}{n}
\def\gA{{\mathcal{A}}}
\def\gB{{\mathcal{B}}}
\def\gD{{\mathcal{D}}}
\def\gF{{\mathcal{F}}}
\def\gH{{\mathcal{H}}}
\def\gK{{\mathcal{K}}}
\def\gL{{\mathcal{L}}}
\def\gO{{\mathcal{O}}}
\def\gS{{\mathcal{S}}}
\def\gX{{\mathcal{X}}}
\def\gY{{\mathcal{Y}}}
\newcommand{\KL}{D_{\mathrm{KL}}}
\DeclareMathOperator*{\argmax}{arg\,max}
\DeclareMathOperator*{\argmin}{arg\,min}
\newcommand*\rot{\rotatebox{90}}
\definecolor{mygreen}{RGB}{0, 128, 0}
\definecolor{myorange}{RGB}{255, 140, 0}
\theoremstyle{plain}
\newtheorem{theorem}{Theorem}[section]
\newtheorem{proposition}[theorem]{Proposition}
\newtheorem{lemma}[theorem]{Lemma}
\theoremstyle{definition}
\theoremstyle{remark}
\icmltitlerunning{Improving Generalization with Flat Hilbert Bayesian Inference.}
\begin{document}

\twocolumn[
\icmltitle{Improving Generalization with Flat Hilbert Bayesian Inference}



\icmlsetsymbol{equal}{*}

\begin{icmlauthorlist}
\icmlauthor{Tuan Truong}{equal,qualcomm}
\icmlauthor{Quyen Tran}{equal,qualcomm}
\icmlauthor{Ngoc-Quan Pham}{qualcomm}
\icmlauthor{Nhat Ho}{ut}
\icmlauthor{Dinh Phung}{qualcomm,monash}
\icmlauthor{Trung Le}{monash}
\end{icmlauthorlist}

\icmlaffiliation{qualcomm}{Qualcomm AI Research, Qualcomm Vietnam Company Limited}
\icmlaffiliation{monash}{Monash University, Australia}
\icmlaffiliation{ut}{The University of Texas at Austin, USA}

\icmlcorrespondingauthor{Tuan Truong}{tuantruo@qti.qualcomm.com}

\icmlkeywords{Machine Learning, ICML, SAM, Sharpness, Bayesian Inference, LoRA}

\vskip 0.3in
]



\printAffiliationsAndNotice{\icmlEqualContribution} 

\begin{abstract}
We introduce Flat Hilbert Bayesian Inference (FHBI), an algorithm designed to enhance generalization in Bayesian inference. Our approach involves an iterative two-step procedure with an adversarial functional perturbation step and a functional descent step within a reproducing kernel Hilbert space. This methodology is supported by a theoretical analysis that extends previous findings on generalization ability from finite-dimensional Euclidean spaces to infinite-dimensional functional spaces. To evaluate the effectiveness of FHBI, we conduct comprehensive comparisons against nine baseline methods on the \texttt{VTAB-1K} benchmark, which encompasses 19 diverse datasets across various domains with diverse semantics. Empirical results demonstrate that FHBI consistently outperforms the baselines by notable margins, highlighting its practical efficacy. 

\end{abstract}

\section{Introduction}
\label{section: introduction}

Quantifying and tackling uncertainty in deep learning is one of the most challenging problems, mainly due to the inherent randomness of the real world and the presence of noisy data. Bayesian inference provides a robust framework for understanding complex data, allowing for probabilistic interpretation of deep learning models and reasoning under uncertainty. This approach not only facilitates predictions but also enables the quantification of uncertainty. A primary challenge in this domain is the computation and sampling from intricate distributions, mainly when dealing with deep learning models. One effective strategy to tackle this issue is variational inference, which seeks to approximate the true posterior distribution with simpler forms, known as approximate posteriors, while optimizing a variational lower bound. Several techniques have been developed in this area, including those by \citet{kingma2013auto, kingma2015variational}, and \citet{blundell2015weight}, who extended the Gaussian variational posterior approximation for neural networks, as well as \citet{gupta2018matrix}, who enhanced the flexibility of posterior approximations. In addition to variational methods, various particle sampling techniques have been proposed for Bayesian inference, especially in scenarios requiring multiple models. Notable particle sampling methods include Hamiltonian Monte Carlo (HMC) \citep{Neal_Bayesian}, Stochastic Gradient Langevin Dynamics (SGLD) \citep{sgld}, Stochastic Gradient HMC (SGHMC) \citep{chen2014stochastic}, and Stein Variational Gradient Descent (SVGD) \citep{svgd}. Each method contributes to a deeper understanding and more practical application of Bayesian inference in deep learning.

Besides quantifying uncertainty, tackling overfitting is a significant challenge in machine learning. Overfitting often occurs when the training process gets stuck in local minima, leading to a model that fails to generalize well to unseen data. This problem is mainly due to loss functions' high-dimensional and non-convex nature, which often exhibit multiple local minima in the loss landscape. In standard deep network training, flat minimizers improve model generalization \citep{keskar2017large, kaddour, li2022}. Among the flat minimizers, Sharpness-Aware Minimization (SAM) \citep{sam} has emerged as a practical approach by concurrently minimizing the empirical loss and reducing the sharpness of the loss function. Recently, SAM has demonstrated its versatility and effectiveness across a wide range of tasks, including meta-learning \citep{abbas2022sharp}, vision models \citep{chen2021vision}, and language models \citep{bahri-etal-2022-sharpness}.
\begin{figure}[t]
    \centering
    \includegraphics[width=0.45\textwidth]{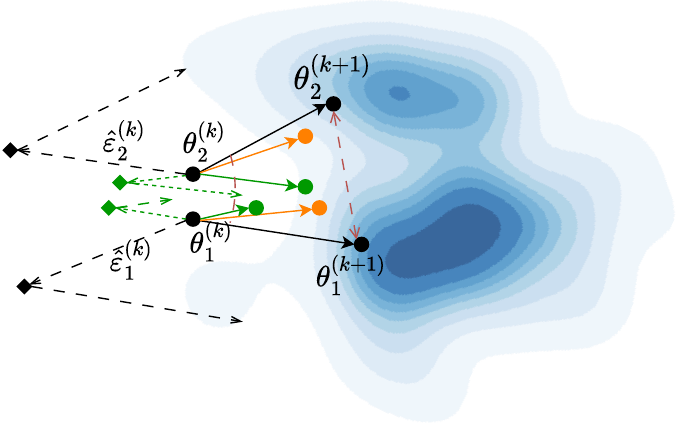}
    \caption{Schematic of SAM w. independent particles {\color{mygreen}(green)}, SVGD {\color{myorange}(orange)}, and our FHBI (black) updates. SAM's particles are not aware of others' trajectories. SVGD only seeks the modes and promotes \textit{spatial} diversity. FHBI seeks the modes, minimizes sharpness, and promotes \textit{spatial} and \textit{angular} diversity.}
    \label{fig:schematic}
\end{figure}

\paragraph{Contribution.} We combine flat minimizers and particle-based Bayesian methods to introduce a novel Bayesian inference framework with improved generalization ability. To accomplish this, we first present Theorem \ref{theorem: flow sharpness}, which strengthens prior generalization bounds from \textit{finite-dimensional} Euclidean spaces to the reproducing kernel Hilbert spaces (RKHS), which are broader functional spaces and are typically \textit{infinite-dimensional}. Notably, this theorem introduces the notion of \textit{functional sharpness} that offers an insight to improve the generalization ability of current particle-sampling methods. Subsequently, Theorem \ref{theorem: pac-bayes} translates these notions of functional sharpness and generalization from the context of RKHS to the context of Bayesian inference. This analysis establishes a connection between the population and empirical KL loss, providing a strategy to enhance generalization by minimizing the population KL loss. Motivated by these two theorems, we present Flat Hilbert Bayesian Inference (FBVI), a practical algorithm that employs a dual-step \textit{functional sharpness-aware} update procedure in RKHS. This approach improves the generalization of sampled particles, thereby enhancing the quality of the ensemble. Overall, our contributions are as follows:
\begin{enumerate}
\item We present a theoretical analysis that characterizes generalization ability over the functional space. This analysis generalizes prior works from the Euclidean space to infinite-dimensional functional space, thereby introducing the notion of \textit{functional sharpness} i.e., the sharpness of the functional spaces.
\item Building on this theoretical foundation, we propose a practical particle-sampling algorithm that enhances the generalization ability over existing methods. We conducted extensive experiments comparing our Flat Hilbert Bayesian Inference (FHBI) algorithm with nine baselines on the \texttt{VTAB-1K} benchmark, which includes 19 datasets across various domains and semantics. Experimental results demonstrated that our algorithm outperforms these baselines by notable margins.
\end{enumerate}

\paragraph{Organization.} The paper is structured as follows: Section \ref{section: related works} reviews the related works on Bayesian inference and the development of flat minimizers. Section \ref{section: preliminaries} provides the necessary background and notations. Section \ref{section: methodology} discusses the motivation and theoretical development behind our sharpness-aware particle-sampling approach. Section \ref{section: experiments} presents experimental results, comparing our algorithm against various Bayesian inference baselines across diverse settings. Section \ref{section: ablation} offers a deeper analysis of FHBI's behavior to gain further insight into its effectiveness over the baselines.


\section{Related Works}
\label{section: related works}
\paragraph{Sharpness-aware minimization.}
Flat minimizers are more robust to the shifts between training and test losses, thereby enhancing the generalization ability of neural networks ~\citep{DBLP:conf/iclr/JiangNMKB20, DBLP:conf/nips/PetzkaKASB21, DBLP:conf/uai/DziugaiteR17}. The relationship between generalization and the width of minima has been studied both theoretically and empirically in several prior works ~\citep{DBLP:conf/nips/HochreiterS94, neyshabur2017exploring, dinh2017sharp, fort2019emergent}. Consequently, a variety of methods have been developed to search for flat minima ~\citep{DBLP:conf/iclr/PereyraTCKH17, Chaudhari2017EntropySGDBG, keskar2017large, DBLP:conf/uai/IzmailovPGVW18}. Among the flat minimizers, Sharpness-Aware Minimization (SAM) \cite{sam} has gained significant attention due to its effectiveness. SAM has been leveraged across a wide range of tasks and domains, including domain generalization~\citep{cha2021swad, wang2023sharpness, zhang2023flatness}, federated learning~\citep{caldarola2022improving, qu2022generalized}, Bayesian networks~\citep{va2023_flat_Bayes, llenhoff2023sam}, and meta-learning~\citep{abbas2022sharp}. Moreover, SAM has demonstrated its ability to enhance generalization in both vision models~\citep{chen2021vision} and language models~\citep{bahri-etal-2022-sharpness}.

Nevertheless, these studies are constrained to finite-dimensional Euclidean spaces. In this work, we strengthen these generalization principles to infinite-dimensional functional spaces and propose a particle-sampling method grounded in this theoretical framework.

\paragraph{Bayesian Inference.} Two main strategies were widely employed in the literature of Bayesian inference. The first paradigm is \textit{Variational Inference}, which aims to approximate a target distribution by selecting a distribution from a family of potential approximations and optimizing a variational lower bound. \citet{graves2011practical} introduced the use of a Gaussian variational posterior approximation for neural network weights, which was later extended by \citet{kingma2013auto, kingma2015variational, blundell2015weight} with the reparameterization trick to facilitate training deep latent variable models. \citet{louizos2017multiplicative} proposed using a matrix-variate Gaussian to model entire weight matrices \citep{gupta2018matrix} to increase further the flexibility of posterior approximations, which offers a novel approach to approximate the posterior. Subsequently, various alternative structured forms of the variational Gaussian posterior were proposed, including the Kronecker-factored approximations \citep{zhang2018noisy, ritter2018scalable, rossi2020walsh}, or non-centered or rank-1 parameterizations \citep{ghosh2018structured, dusenberry2020efficient}. Recently, \citet{quan2025promoting} proposes to incorporate sharpness and distributional robustness to Bayesian Inference, thereby modelling the interactions between particles and allowing a more diverse set of model particles. 

The second paradigm of Bayesian inference is \textit{Markov Chain Monte Carlo} (MCMC), which involves sampling multiple models from the posterior distribution. MCMC has been applied to neural network inference, such as Hamiltonian Monte Carlo (HMC) \citep{Neal_Bayesian}. However, HMC requires the computation of full gradients, which can be computationally expensive. To address this, Stochastic Gradient Langevin Dynamics (SGLD) \citep{sgld} integrates first-order Langevin dynamics within a stochastic gradient framework. Stochastic Gradient HMC (SGHMC) \citep{chen2014stochastic} further incorporates stochastic gradients into Bayesian inference, enabling scalability and efficient exploration of different solutions. Another critical approach, Stein Variational Gradient Descent (SVGD) \citep{svgd}, closely related to our work, uses a set of particles that converge to the target distribution. It is also theoretically established that SGHMC, SGLD, and SVGD asymptotically sample from the posterior as the step sizes approach zero.

\section{Backgrounds and Notations}
\label{section: preliminaries}
\paragraph{Bayesian Inference.} Consider a family of neural networks $f_\vtheta(x)$, where the random variable $\vtheta$ represents the model parameters and takes values in the model space $\Theta \subset \mathbb{R}^d$. We are given a training set $\gS = \{(x_i, y_i)\}_{i = 1}^n$ of $n$ i.i.d observations from the data space $\gX \times \gY$, and the prior distribution of the parameters $p(\vtheta)$. In the literature on Bayesian inference problems, prior works typically focus on approximating the \textit{empirical posterior} $\mathbb{P}_{\theta|\gS}$, whose density function $p(\vtheta |\gS)$ is defined as:

\begin{equation*}
    p(\vtheta | \gS) \propto p(\vtheta)\prod_{i=1}^n p(y_i|x_i, \gS, \vtheta),
\end{equation*}
where the prior distribution $\mathbb{P}_\theta$ has the density function $p(\vtheta)$. The likelihood term is proportional to
\begin{align*}
    p(y|x, \gS, \vtheta) &\propto \exp \Bigg(-\frac{1}{n}\ell(f_\vtheta(x), y) \Bigg),
\end{align*}
with some loss function $\ell$, such as the CrossEntropy loss, and a sufficiently expressive model $f_\theta$. Then, the empirical posterior is:

\begin{equation}
\label{eq: empirical loss}
    p(\vtheta| \gS) \propto \exp \Bigg(-\frac{1}{n} \sum_{i=1}^n\ell(f_\vtheta(x_i), y_i)\Bigg)p(\vtheta).
\end{equation}

More formally, the empirical posterior is equal to:

\begin{equation}
\label{eq: empirical loss closed form}
    p(\vtheta| \gS) = \exp \Bigg(-\frac{1}{n} \sum_{i=1}^n\ell(f_\vtheta(x_i), y_i)\Bigg)p(\vtheta)/Z_{\gS},
\end{equation}

where $Z_\gS$ is the normalizing constant. We define the population and empirical losses as follows:
\begin{gather*}
    \gL_\gD(\vtheta) = \mathbb{E}_{(x, y) \sim \gD}[\ell(f_\vtheta(x), y)], \\
    \gL_\gS(\vtheta) = \mathbb{E}_{(x, y) \sim \gS}[\ell(f_\vtheta(x), y)]=\frac{1}{n}\sum_{i=1}^n \ell(f_\vtheta(x_i), y_i).
\end{gather*}
The \textit{population loss} is defined as the expected loss over the entire data-label distribution. In contrast, the \textit{empirical loss} is the average loss computed over a given training set $\gS$. Based on these definitions, the empirical posterior in Eq. \ref{eq: empirical loss closed form} can be written as:

\begin{equation*}
    p(\vtheta|\gS) = \exp(-\gL_\gS(\vtheta))p(\vtheta)/Z_\gS.
\end{equation*}

Intuitively, models with parameters $\vtheta$ that fit well to the training set $\gS$ lead to lower empirical loss values, resulting in higher density in the empirical posterior. This formulation has appeared in prior works on Bayesian Inference, such as \cite{va2023_flat_Bayes}. However, simply fitting to the training samples can lead to overfitting. To improve generalization, we are more concerned with performance over the entire data distribution $\gD$ rather than just the specific sample $\gS$. Accordingly, we define the population posterior as $\mathbb{P}_\gD$ whose density is given by:
\begin{equation}
\label{eq: population posterior}
    p(\vtheta| \gD) = \exp(-\gL_\gD(\vtheta))p(\vtheta)/Z_\gD,
\end{equation}
with the normalizing constant $Z_\gD$. This population posterior is more general than the empirical posterior, as it captures the true posterior of the parameters under the full data distribution. However, understanding the population posterior is particularly challenging because we can only access the empirical loss $\gL_\gS(\vtheta)$, not the population loss $\gL_\gD(\vtheta)$. In this paper, we deviate from prior approaches that primarily focus on approximating the empirical posterior and instead propose a particle-sampling method to approximate the population posterior.
\vspace{-3mm}

\paragraph{Reproducing Kernel Hilbert Space (RKHS).} Let $k(\vtheta, \vtheta'): \Theta \times \Theta \to \mathbb{R}$ be a positive definite kernel operating on the model space. The reproducing kernel Hilbert space (RKHS) $\gH$ of $k(\vtheta, \vtheta')$ is the closure of the linear span $\{f: f(\cdot) = \sum_{i} a_i k(\cdot, \vtheta_i), a_i \in \mathbb{R}, \vtheta_i \in \Theta\}$. For $f(\vtheta) = \sum_i a_i k(\vtheta, \vtheta_i)$ and $g(\vtheta) = \sum_j b_j k(\vtheta, \vtheta_j)$, $\gH$ is equipped with the inner product defined by $\left<f, g\right>_\gH = \sum_{ij}a_i b_j k(\vtheta_i, \vtheta_j)$. For all $\vtheta \in \Theta$, there exists a unique element $K_\vtheta \in \gH$ with the reproducing property that $f(\vtheta) = \left< f, K_\vtheta\right>_\gH$ for any $f \in \gH$.

Given that $\mathcal{H}$ is a scalar-valued RKHS with kernel $k(\theta, \theta')$, $\gH^d = \gH \times \gH \times \cdots \times \gH$ is a vector-valued RKHS of functions $\vf = [f_1, f_2, \cdots, f_d]$ corresponding to the kernel $K(\vtheta, \vtheta')=k(\vtheta,\vtheta')\mI$. $\gH^d$ is equipped with the inner product $\left< \vf, \vg\right>_{\gH^d} = \sum_{i=1}^d \left<f_i, g_i \right>_\gH$. 

Let $\mF[\vf]$ be a functional on $\vf\in\gH^d$. Similar to the definition by \cite{svgd}, the (functional) gradient of $\mF$ is defined as a function $\nabla_\vf \mF[\vf] \in \gH^d$ such that for any $\vg \in \gH^d$ and $\epsilon \in \mathbb{R}$
\begin{equation}
    \label{eq: functional gradient}
    \mF[\vf+\epsilon \vg] = \mF[\vf]+\epsilon\left<\nabla_{\vf}\mF[\vf], g\right>_{\gH^d}+\gO(\epsilon^2).
\end{equation}

\vspace{-3mm}

\paragraph{Stein Variational Gradient Descent (SVGD).} Given a general target distribution $p(\boldsymbol{\theta})$, SVGD \citep{svgd} aims to find a flow of distributions $\{q^{(k)}\}_k$ that minimizes the KL distance to the target distribution. Motivated by the Stein identity and Kernelized Stein Discrepancy, SVGD proposes the update $q^{(k+1)} = q^{(k)}_{[\mT]}$, in which $\mT: \Theta \to \Theta$ is a smooth one-to-one push-forward map of the form $\mT(\vtheta) = \vtheta + \epsilon \phi^*_{p,q}(\vtheta)$ in which:
\begin{align*}
    \phi^*_{p, q}(\cdot) &= \mathbb{E}_{\vtheta \sim q}[\gA_p k(\vtheta, \cdot)] \\ \gA_p \phi(\vtheta) &= \phi(\vtheta) \nabla_\vtheta \log p(\vtheta)^\top + \nabla_\vtheta \phi(\vtheta).
\end{align*}

$\gA_p$ is known as the Stein operator, which acts on $\phi$ and produces a zero-mean function $\gA_p \phi(\vtheta)$ when $\vtheta \sim p$. While SVGD is designed for general target distributions $p$, in the context of Bayesian inference, it is only applicable to the empirical posterior rather than the population posterior, which we will discuss in detail in the next section.

\section{Flat Hilbert Bayesian Inference (FHBI)}
\label{section: methodology}
Consider the Bayesian inference problem of approximating a posterior distribution. In prior works, such as SVGD \citep{svgd}, when applying to the context of Bayesian inference, the methods are only applicable to the \textit{empirical posterior} $p(\vtheta|\gS)$ because we only have access to the empirical loss. It is evident that when sampling a set of $m$ particle models $\vtheta_{1:m}$ from $p(\vtheta|\gS)$, these particles congregate in the high-density regions of the empirical posterior $p(\vtheta|\gS)$, corresponding to the areas with low \textit{empirical loss} $\gL_\gS(\vtheta)$. However, to avoid overfitting, it is preferable to sample the particle models $\vtheta_{1:m}$ from the \textit{population posterior} $p(\vtheta| \gD) \propto \exp(-   \gL_\gD(\vtheta))p(\vtheta)$, as this approach directs the particle models $\vtheta_{1:m}$ towards regions with low values of the \textit{population loss} $\gL_\gD(\vtheta)$, thus improving generalization ability. To better understand this motivation from a theoretical perspective, consider the following proposition, with the proof provided in Appendix \ref{proof: proposition 1}:
\begin{proposition}
    \label{proposition 1}
    Consider the problem of finding the distribution $\mathbb{Q}$ that solves:
    \begin{equation}
        \mathbb{Q}^* = \min_{\mathbb{Q} \ll \mathbb{P}_{\mathbf{\theta}}} \Bigg\{\mathbb{E}_{\theta \sim \mathbb{Q}}[\mathcal{L}_\mathcal{D}(\vtheta)] + \KL(\mathbb{Q} \| \mathbb{P}_{\mathbf{\theta}}) \Bigg\}
    \end{equation}
    where we search over $\mathbb{Q}$ absolutely continuous w.r.t $\mathbb{P}_\vtheta$, and the second term is the regularization term. The closed-form solution to this problem is exactly the \textit{population posterior} defined in Eq. \ref{eq: population posterior}.
\end{proposition}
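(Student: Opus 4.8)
The plan is to recognize the objective as an instance of the Gibbs variational principle (the Donsker--Varadhan / evidence-lower-bound identity), so that the minimization collapses to a single KL divergence that is trivially minimized. Concretely, I would first express both measures through their densities with respect to a common dominating measure, letting $q$ denote the density of a candidate $\mathbb{Q} \ll \mathbb{P}_\vtheta$ and $p(\vtheta)$ the prior density, and recall from Eq. \ref{eq: population posterior} that the population posterior $\mathbb{P}_\gD$ has density $p(\vtheta|\gD) = \exp(-\gL_\gD(\vtheta)) p(\vtheta)/Z_\gD$, where $Z_\gD = \int \exp(-\gL_\gD(\vtheta)) p(\vtheta)\, d\vtheta$ is a finite constant not depending on $\mathbb{Q}$.

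The key algebraic step is to expand the KL divergence between the candidate $\mathbb{Q}$ and the population posterior $\mathbb{P}_\gD$ and observe that it differs from the objective only by the additive constant $\log Z_\gD$. Substituting the closed form of $p(\vtheta|\gD)$,
\begin{equation*}
    \KL(\mathbb{Q} \,\|\, \mathbb{P}_\gD) = \E_{\vtheta \sim \mathbb{Q}}\!\left[ \log \frac{q(\vtheta)}{p(\vtheta|\gD)} \right] = \E_{\vtheta \sim \mathbb{Q}}\!\left[ \log \frac{q(\vtheta)}{p(\vtheta)} + \gL_\gD(\vtheta) + \log Z_\gD \right],
\end{equation*}
which rearranges to $\KL(\mathbb{Q} \,\|\, \mathbb{P}_\gD) = \E_{\vtheta \sim \mathbb{Q}}[\gL_\gD(\vtheta)] + \KL(\mathbb{Q} \,\|\, \mathbb{P}_\vtheta) + \log Z_\gD$. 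The first two terms on the right are exactly the objective being minimized, so the objective equals $\KL(\mathbb{Q} \,\|\, \mathbb{P}_\gD) - \log Z_\gD$.

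Since $\log Z_\gD$ is independent of $\mathbb{Q}$, minimizing the objective is equivalent to minimizing $\KL(\mathbb{Q} \,\|\, \mathbb{P}_\gD)$. By Gibbs' inequality (the information inequality), this divergence is non-negative and vanishes if and only if $\mathbb{Q} = \mathbb{P}_\gD$ almost everywhere; hence the unique minimizer is $\mathbb{Q}^* = \mathbb{P}_\gD$, the population posterior defined in Eq. \ref{eq: population posterior}, as claimed.

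The argument is essentially a one-line rearrangement once the completing-the-KL trick is identified, so I anticipate no serious obstacle. The only points demanding minor care are the measure-theoretic bookkeeping --- justifying the density manipulations through the assumed absolute continuity $\mathbb{Q} \ll \mathbb{P}_\vtheta$, and verifying that $Z_\gD < \infty$ so that $\mathbb{P}_\gD$ is a genuine probability measure --- together with confirming that the relevant expectations are finite, so that splitting and recombining the terms in the display above is legitimate rather than an indeterminate $\infty - \infty$.
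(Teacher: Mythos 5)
Your proof is correct, but it takes a genuinely different route from the paper's. The paper proceeds by calculus of variations: it writes the objective as $\int_\Theta \mathcal{L}_\mathcal{D}(\vtheta)q(\vtheta)\,d\vtheta + \int_\Theta q(\vtheta)\log\frac{q(\vtheta)}{p(\vtheta)}\,d\vtheta$, forms a Lagrangian with a multiplier $\alpha$ enforcing the normalization $\int q(\vtheta)\,d\vtheta = 1$, takes the pointwise derivative in $q(\vtheta)$, sets it to zero, and reads off $q(\vtheta) \propto \exp(-\mathcal{L}_\mathcal{D}(\vtheta))p(\vtheta)$. Your completing-the-KL argument --- rewriting the objective as $\KL(\mathbb{Q}\,\|\,\mathbb{P}_\mathcal{D}) - \log Z_\mathcal{D}$ and invoking Gibbs' inequality --- reaches the same conclusion but buys more rigor: it identifies the \emph{global} minimizer and its uniqueness directly, whereas the Lagrangian computation only establishes a first-order stationarity condition (it does not verify convexity or second-order conditions, nor does it enforce $q \geq 0$, which your argument handles automatically since the KL term does that work). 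The paper's approach, on the other hand, does not require guessing the form of the answer in advance; it derives the exponential-tilting form constructively, which is pedagogically natural when the closed form is not already conjectured. Your concluding caveats (finiteness of $Z_\mathcal{D}$ and of the relevant expectations, so the rearrangement is not $\infty - \infty$) are points the paper glosses over as well, so flagging them is a genuine, if minor, improvement.
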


In this proposition, we aim to identify the posterior distribution that minimizes the \textit{expected population loss}, where the expectation is taken over the entire parameter space with $\vtheta \sim \mathbb{Q}^*$ while maintaining proximity to the prior distribution to ensure simplicity. With access to this posterior $\mathbb{Q}^*$, we can sample a set of particles whose average performance optimally minimizes the population loss. Since the solution to this optimization problem corresponds precisely to the population posterior, the \textit{ensemble of the particles} sampled from $ \mathbb{Q}^* \equiv p(\theta|\mathcal{D})$ effectively minimizes the average value of the population loss. This is because $\mathbb{Q}^* $ is explicitly chosen to minimize the expected value of the population loss $ \mathcal{L}_\mathcal{D}$, which means the ensemble fits the whole data distribution instead of overfitting to the specific dataset $\mathcal{S}$, therefore establishes improved generalizability. Consequently, this proposition theoretically asserts that sampling from $p(\vtheta | \mathcal{D})$ enhances the generalizability of the ensemble.

\subsection{Theoretical analysis}

Motivated by this observation, we advance prior work by \textit{approximating the population posterior}. Specifically, to improve generalizability, our objective is to approximate the target population posterior distribution $p(\vtheta | \gD)$ using a simpler distribution $q^*(\vtheta)$ drawn from a predefined set of distributions $\gF$. This is achieved by minimizing the KL divergence:
\begin{equation}
    \label{eq: optimization problem}
    q^* = \argmin_{q \in \gF} \KL\Bigg(q(\vtheta) \| p(\vtheta | \gD)\Bigg).
\end{equation}
Ideally, the set $\gF$ should be simple enough for a simple solution and effective inference while sufficiently broad to approximate a wide range of target distributions closely. Let $q(\vtheta)$ be the density of a reference distribution. We define $\gF$ as the set of distributions for random variables of the form $\mathbf{\vartheta} = \mT(\vtheta)$, where $\mT: \Theta \to \Theta$ is a smooth, bijective mapping, and $\vtheta$ is sampled from $q$. By variable change, the density of $\mathbf{\vartheta}$, denoted as $q_{[\mT]}(\cdot)$, is expressed as follows:
\begin{equation*}
    q_{[\mT]}(\mathbf{\vartheta}) = q(\mT^{-1}(\vartheta))|\det(\nabla_\mathbf{\vartheta} \mT^{-1}(\mathbf{\vartheta}))|.
\end{equation*}
We restrict the set of the smooth transformations $\mT$ to the set of push-forward maps of the form $\mT(\vtheta) = \vtheta + \vf(\vtheta)$, where $\vf \in \gH^d$. When $\|\vf\|_{\gH^d}$ is sufficiently small, the Jacobian of $\mT = \mI + \vf$ is full-rank where $\mI$ denotes the identity map, in which case $\mT$ is guaranteed to be a one-to-one map according to the inverse function theorem. Under this restriction, the problem is equivalent to solving an optimization problem over the RKHS:
\begin{align*}
    \vf^* = \argmin_{\vf\in\gH^d, \|\vf\|_{\gH^d} \leq \epsilon} \KL\Bigg(q_{[\mI + \vf]}(\vtheta)\|p(\vtheta|\gD)\Bigg).
\end{align*}
The challenge with this optimization problem lies in our lack of access to the population loss function $\gL_\gD(\vtheta)$ and the population posterior distribution $p(\vtheta | \gD)$. We present our first theorem to address this issue, which characterizes generalization ability in the functional space $\gH^d$. The proof of this theorem can be found in Appendix \ref{proof: flow sharpness}.
\begin{theorem} [Informal]
\label{theorem: flow sharpness}
Let $\Tilde{\ell}: \gH^d \times \gX \times \gY \to \mathbb{R}^+$ be a loss function on the RKHS $\gH^d$ and the data space. Define $\Tilde{L}_\gD(\vf) = \mathbb{E}_{(x, y) \sim \gD}[\Tilde{\ell}(\vf, x, y)]$ and $\Tilde{L}_\gS(\vf) = \frac{1}{n}\sum_{i=1}^n\Tilde{\ell}(\vf, x_i, y_i)$ be the corresponding population and empirical losses. Then for any $\rho >0$ and any distribution $\gD$, with probability of $1-\delta$ over the choice of the training set $\gS \sim \gD^n$, we have:
\begin{align*}
     \Tilde{L}_\gD (\vf) &\leq \max_{\vf' \in \gH^d, \|\vf' - \vf\|_{\gH^d} \leq \rho}\Tilde{L}_\gS(\vf') 
     \\&+\mathcal{O}\left(\sqrt{\frac{\log(1+\frac{1}{\rho^2})+ \log\left(\frac{n}{\delta}\right)}{n-1}}\right),
\end{align*}
\end{theorem}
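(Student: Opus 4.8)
The plan is to adapt the PAC-Bayesian argument underlying the finite-dimensional SAM bound of \citet{sam} to the Hilbert space $\gH^d$, treating $\vf$ itself as the ``hypothesis'' and perturbing it by a centered Gaussian whose fluctuations are controlled in the RKHS norm. Concretely, I would fix a centered Gaussian measure $P = \mathcal{N}(0,\sigma^2\Sigma)$ on $\gH^d$ as the data-independent prior and take the data-dependent posterior to be the same measure recentered at $\vf$, i.e.\ $Q = \mathcal{N}(\vf,\sigma^2\Sigma)$. McAllester's bound then yields, with probability $1-\delta$ over $\gS\sim\gD^n$,
\[
\mathbb{E}_{\vf'\sim Q}[\tilde{L}_\gD(\vf')] \le \mathbb{E}_{\vf'\sim Q}[\tilde{L}_\gS(\vf')] + \sqrt{\frac{\KL(Q\|P) + \log\frac{n}{\delta}}{2(n-1)}},
\]
where, by the Cameron--Martin theorem, $\KL(Q\|P) = \tfrac{1}{2\sigma^2}\|\Sigma^{-1/2}\vf\|_{\gH^d}^2$ collapses to a clean quadratic in $\vf$ precisely because $P$ and $Q$ share the covariance $\sigma^2\Sigma$.

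Next I would convert the two posterior expectations into the quantities appearing in the statement. For the empirical term, a Gaussian concentration estimate shows that $\|\vf'-\vf\|_{\gH^d}\le\rho$ holds with probability close to one once $\sigma$ is calibrated so that the effective radius of $\sigma^2\Sigma$ matches $\rho$; bounding $\tilde{\ell}$ on the low-probability tail by its supremum (invoking boundedness of the loss) then gives $\mathbb{E}_{Q}[\tilde{L}_\gS(\vf')] \le \max_{\|\vf'-\vf\|_{\gH^d}\le\rho}\tilde{L}_\gS(\vf') + o(1)$. The same concentration controls the gap between $\mathbb{E}_Q[\tilde{L}_\gD(\vf')]$ and $\tilde{L}_\gD(\vf)$ on the population side. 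Finally, since $\sigma$ must be chosen before seeing the data whereas its optimal value depends on $\rho$ and $\|\vf\|$, I would instantiate a countable geometric grid of candidate variances, apply the bound at each with a $\delta$-budget split across the grid, and select the grid point nearest the calibrated value; this union bound is exactly what produces the $\log\bigl(1+\tfrac{1}{\rho^2}\bigr)$ term, after which collecting the $\mathcal{O}(\cdot)$ contributions gives the stated rate.

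The main obstacle is the infinite-dimensionality of $\gH^d$: the isotropic Gaussian $\mathcal{N}(0,\sigma^2\mI)$ is not a countably additive measure on an infinite-dimensional Hilbert space, since the identity is not trace-class, so both prior and posterior must be built from a genuine trace-class covariance operator $\Sigma$, and one must verify that the Cameron--Martin/RKHS geometry still lets the Gaussian concentrate inside an $\gH^d$-ball of radius $\rho$ with a \emph{dimension-free} tail. I would address this either by carrying a trace-class $\Sigma$ throughout and tracking only the scalar quantities $\|\Sigma^{-1/2}\vf\|_{\gH^d}$ and $\Tr(\Sigma)$, so that no explicit dimension ever enters, or by first proving the bound on the finite-dimensional subspace spanned by the representer functions and passing to the limit. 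Reconciling the Gaussian's concentration with the RKHS-norm ball in a dimension-independent manner is the step I expect to demand the most care, as it is what separates this result from the $d$-dependent Euclidean bound it generalizes.
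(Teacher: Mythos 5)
Your route is genuinely different from the paper's, and it runs into an obstruction specific to infinite dimensions that your proposal does not resolve. You correctly note that the prior and posterior must share a trace-class covariance $\sigma^2\Sigma$, and that then $\KL(Q\|P)=\tfrac{1}{2\sigma^2}\|\Sigma^{-1/2}\vf\|_{\gH^d}^2$ by Cameron--Martin. But this already concedes two things. First, that quantity is finite only when $\vf$ lies in the Cameron--Martin space $\Sigma^{1/2}(\gH^d)$, a measure-zero subspace of $\gH^d$, and $\|\Sigma^{-1/2}\vf\|$ is not controlled by $\|\vf\|_{\gH^d}$; the theorem's right-hand side carries no such $\vf$-dependent constant, so your bound cannot hold uniformly over $\gH^d$ as stated. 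Second, and more fatally, the step that actually produces the $\log(1+1/\rho^2)$ term in the SAM proof is not the union bound over a grid (that only contributes an additive $\log$ of the grid size); it is the KL computation against a prior whose variance is \emph{inflated} relative to the posterior, i.e.\ a prior and posterior with different covariance scales. By the Feldman--H\'ajek dichotomy, two Gaussian measures on an infinite-dimensional Hilbert space whose covariances differ by a scalar factor $\neq 1$ are mutually singular, so that KL is $+\infty$ and the variance-inflation trick is simply unavailable in $\gH^d$. With matched covariances, the calibration you describe (requiring $\sigma^2\Tr(\Sigma)\lesssim\rho^2$ so the Gaussian concentrates in the $\rho$-ball) forces $\KL(Q\|P)\gtrsim \Tr(\Sigma)\,\|\Sigma^{-1/2}\vf\|^2/(2\rho^2)$, which is polynomial in $1/\rho$, not logarithmic, and carries the uncontrolled factor above. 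So the argument as planned cannot reach the stated rate.

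The paper sidesteps infinite-dimensional measure theory entirely. Assuming a H\"older-continuous loss functional and a bounded kernel, it approximates $\Tilde{\ell}(\vf,x,y)$ by a network $G_\mW(\vf(\overline{\vtheta}),x,y)$ that depends on $\vf$ only through finitely many point evaluations $\vf(\overline{\vtheta})\in\mathbb{R}^{N'}$ (Lemma \ref{lemma: RKHS approximation}), uses the reproducing property to show that an RKHS perturbation of norm at most $\rho$ moves these evaluations by at most $P\rho$ in Euclidean norm, invokes the finite-dimensional SAM bound in $\mathbb{R}^{N'}$ (where the variance-inflation trick is legitimate), and finally drives the approximation error to zero. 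The $\log(1+\cdot/\rho^2)$ structure is inherited from the finite-dimensional result; the price is a dimension-like factor $N'$ that the informal statement absorbs into the $\mathcal{O}(\cdot)$. If you want to rescue a direct PAC-Bayes argument in $\gH^d$, you would have to either restrict to finite-rank covariances (which is, in effect, the paper's reduction) or accept a bound depending on $\Tr(\Sigma)$ and $\|\Sigma^{-1/2}\vf\|$ --- a different theorem from the one claimed.
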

This theorem extends prior results, such as the generalization bounds established by \citet{sam} and \citet{fishersam}, from Euclidean space to a broader, more general reproducing kernel Hilbert space. It is noteworthy that this is not a straightforward extension, as the previous generalization bounds rely on the dimensionality of the domain, while the RKHS is typically infinite-dimensional for many widely used kernels such as the RBF kernels \citep{rkhsinfinitedim}. Building on the first theorem, we present the second theorem, which directly addresses the population posterior and serves as the primary motivation for our method. The proof of this theorem can be found in Appendix \ref{proof: pac-bayes}.
\begin{theorem}[Informal]
\label{theorem: pac-bayes}
Let $q$ be any distribution and $d_{\mathrm{VC}}$ denotes the VC dimension of he hypothesis space $\mathcal{F} = \{f_\vtheta: \vtheta \in \Theta\}$. For any $\rho > 0$, with probability of $1 - \delta$ over the training set $\gS$ generated by distribution $\gD$, we have:
    \begin{align*}
    &\KL\Big(q_{[\mI+ \vf]} || p(\vtheta|\gD)\Big)\\ 
    &\leq \max_{\vf' \in \gH^d, \|\vf' - \vf\| \leq \rho}\KL\Big(q_{[\mI + \vf']} || p(\vtheta | \gS)\Big) \\
    &+ \mathcal{O}\left(\sqrt{\frac{\log(1+\frac{1}{\rho^2})+ \log\left(\frac{n}{\delta}\right)}{n-1}}+ \frac{\sqrt{d_{VC}\log\frac{2en}{d_{VC}}}}{\delta\sqrt{2n}}\right).
    \end{align*}
\end{theorem}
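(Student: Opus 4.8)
The plan is to rewrite the population-level objective $\KL(q_{[\mI+\vf]} \,\|\, p(\vtheta\mid\gD))$ as the corresponding empirical-level objective plus two controllable gaps, and then to invoke Theorem~\ref{theorem: flow sharpness} for the first gap and a VC-type uniform-convergence argument for the second. First I would expand the divergence using the closed forms in Eq.~\ref{eq: population posterior} and Eq.~\ref{eq: empirical loss closed form}. Writing $\tilde q := q_{[\mI+\vf]}$ and substituting $\log p(\vtheta\mid\gD) = -\gL_\gD(\vtheta) + \log p(\vtheta) - \log Z_\gD$ yields
\begin{equation*}
\KL\big(\tilde q \,\|\, p(\cdot\mid\gD)\big) = C(\vf) + \mathbb{E}_{\vtheta\sim\tilde q}[\gL_\gD(\vtheta)] + \log Z_\gD,
\end{equation*}
where $C(\vf) := \mathbb{E}_{\vtheta\sim\tilde q}[\log\tilde q(\vtheta) - \log p(\vtheta)] = \KL(\tilde q\,\|\,\mathbb{P}_\vtheta)$ is the data-independent divergence to the prior, and the identical expansion holds for $\gS$ with $\gL_\gD,Z_\gD$ replaced by $\gL_\gS,Z_\gS$. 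Note that $C(\vf)\ge 0$ since it is itself a KL divergence, a fact I will use to satisfy the positivity requirement of Theorem~\ref{theorem: flow sharpness}.

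Next I would package the $\vf$-dependence into a single RKHS loss. Setting $\tilde\ell(\vf,x,y) := \mathbb{E}_{\vtheta\sim q}[\ell(f_{(\mI+\vf)(\vtheta)}(x),y)]$, the change of variables $\vartheta=(\mI+\vf)(\vtheta)$ gives $\mathbb{E}_{(x,y)\sim\gD}[\tilde\ell(\vf,x,y)] = \mathbb{E}_{\vtheta\sim\tilde q}[\gL_\gD(\vtheta)]$, and likewise for $\gS$. I then define the augmented loss $\hat\ell(\vf,x,y) := \tilde\ell(\vf,x,y) + C(\vf)$, which is nonnegative because both summands are, and whose population and empirical averages are $\hat L_\gD(\vf) = \KL(\tilde q\|p(\cdot\mid\gD)) - \log Z_\gD$ and $\hat L_\gS(\vf) = \KL(\tilde q\|p(\cdot\mid\gS)) - \log Z_\gS$ (adding the term $C(\vf)$, which is constant in $(x,y)$, shifts both averages equally). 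Applying Theorem~\ref{theorem: flow sharpness} to $\hat\ell$ with confidence $\delta/2$ and pulling the constant $\log Z_\gS$ outside the maximum gives, with probability $1-\delta/2$,
\begin{align*}
\KL\big(\tilde q\|p(\cdot\mid\gD)\big) &\le \max_{\|\vf'-\vf\|_{\gH^d}\le\rho}\KL\big(q_{[\mI+\vf']}\|p(\cdot\mid\gS)\big) \\
&\quad + \log\tfrac{Z_\gD}{Z_\gS} + \mathcal{O}\!\Big(\sqrt{\tfrac{\log(1+1/\rho^2)+\log(n/\delta)}{n-1}}\Big).
\end{align*}

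It remains to control the normalizing-constant ratio. Writing $Z_\gD = \mathbb{E}_{\vtheta\sim\mathbb{P}_\vtheta}[e^{-\gL_\gD(\vtheta)}]$ and similarly for $Z_\gS$, the elementary bound $e^{-\gL_\gD(\vtheta)}\le e^{\,\sup_\vtheta|\gL_\gD(\vtheta)-\gL_\gS(\vtheta)|}\,e^{-\gL_\gS(\vtheta)}$ yields $\log(Z_\gD/Z_\gS)\le \sup_{\vtheta\in\Theta}|\gL_\gD(\vtheta)-\gL_\gS(\vtheta)|$. I would then bound the expected uniform deviation $\mathbb{E}_\gS[\sup_\vtheta|\gL_\gD-\gL_\gS|]$ by a standard symmetrization/Rademacher argument together with Sauer's lemma, giving the rate $\mathcal{O}(\sqrt{d_{\mathrm{VC}}\log(2en/d_{\mathrm{VC}})/(2n)})$ for the hypothesis class $\mathcal{F}$; a single application of Markov's inequality converts this into a high-probability statement, so that with probability $1-\delta/2$ the uniform deviation is at most $\mathcal{O}(\sqrt{d_{\mathrm{VC}}\log(2en/d_{\mathrm{VC}})}/(\delta\sqrt{2n}))$. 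The $1/\delta$ (rather than $\sqrt{\log(1/\delta)}$) dependence is precisely the signature of this Markov step. A union bound over the two events then completes the proof.

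The main obstacle is the middle step: constructing an RKHS loss whose population and empirical averages reproduce the two KL divergences up to the normalizing constants, so that the infinite-dimensional bound of Theorem~\ref{theorem: flow sharpness} applies verbatim. The delicate points are (i) verifying that folding the prior-divergence term $C(\vf)$ into the per-example loss preserves both nonnegativity and the regularity hypotheses of that theorem, and (ii) ensuring the reduction of $\log(Z_\gD/Z_\gS)$ to a $d_{\mathrm{VC}}$-controlled quantity is legitimate for the bounded loss $\ell$ at hand rather than only for the $0/1$ loss; the remaining manipulations are routine.
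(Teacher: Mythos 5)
Your proposal is correct and follows essentially the same route as the paper's proof: both decompose the two KL divergences into the per-example functional loss $\int q_{[\vf]}(\vtheta)\bigl(\ell(\vtheta;x,y)+\log\tfrac{q_{[\vf]}(\vtheta)}{p(\vtheta)}\bigr)\,d\vtheta$ plus log-normalizers, apply Theorem~\ref{theorem: flow sharpness} to that constructed loss, and control $\log(Z_\gD/Z_\gS)$ by the uniform deviation $\sup_{\vtheta\in\Theta}|\gL_\gD(\vtheta)-\gL_\gS(\vtheta)|$, bounded through the growth function and Sauer's lemma with the $1/\delta$ Markov-type dependence (the paper obtains this step by citing Theorem 6.11 of \cite{uml} rather than re-deriving it by symmetrization as you sketch). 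Your write-up is in fact slightly tidier on two minor points the paper glosses over: the explicit $\delta/2$ union bound over the two probabilistic events, and the correct sign $Z_\gD=\mathbb{E}_{\vtheta\sim\mathbb{P}_\vtheta}[e^{-\gL_\gD(\vtheta)}]$ in the normalizing constants.
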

  Our objective is to learn the function $\vf^* \in \gH^d$ that minimizes $\KL(q_{[\mI + \vf]}\| p(\vtheta| \gD))$. Motivated by Theorem \ref{theorem: pac-bayes}, we propose to \textit{implicitly} minimize $\KL(q_{[\mI + \vf]} || p(\vtheta|\gD))$ by minimizing the right-hand side term $\max_{\|\vf'-\vf\|_{\gH^d} \leq \rho}\KL\Big(q_{[\mI + \vf']} || p(\vtheta | \gS)\Big)$. For any $\vf \in \gH^d$, let $\mF[\vf] = \KL\Big(q_{[\mI + \vf]} \|p(\vtheta | \gS)\Big)$ and $\vf' = \vf + \rho \hat{\vf}$, it follows that:
\begin{align}
    &\argmax_{\|\vf'-\vf\|_{\gH^d} \leq \rho}\KL\Big(q_{[\mI + \vf']} || p(\vtheta | \gS)\Big) \\
    &= \argmax_{\|\hat{\vf}\|_{\gH^d} \leq 1}\mF[\vf+\rho \hat{\vf}] \\
    &= \argmax_{\|\hat{\vf}\|_{\gH^d} \leq 1}\mF[\vf]+\rho\left<\hat{\vf}, \nabla_\vf \mF[\vf]\right>_{\gH^d} + \gO(\rho^2)\\
    &\approx\argmax_{\|\hat{\vf}\|_{\gH^d} \leq 1}\left<\hat{\vf}, \nabla_\vf \mF[\vf]\right>_{\gH^d}.
    \label{eq: max approximation}
\end{align}
Let $\vg = \nabla_\vf \mF[\vf] \in \gH^d$. The Cauchy-Schwarz inequality on Hilbert spaces \citep{functional_analysis} implies:
\begin{equation*}
    \Bigg| \left<\hat{\vf}, \vg\right>_{\gH^d}\Bigg| \leq \left<\hat{\vf}, \vg\right>_{\gH^d} \leq \|\hat{\vf}\|_{\gH^d} \|\vg\|_{\gH^d} \leq \|\vg\|_{\gH^d}. 
\end{equation*}
In turn, the solution $\hat{\vf}^*$ that solves the maximization problem in Eq. \ref{eq: max approximation} is given by:
\begin{equation}
    \label{eq: sharpness perturbation}
    \hat{\vf}^* = \frac{\vg}{\|\vg\|_{\gH^d}} = \frac{\nabla_\vf \KL \Big(q_{[\mI + \vf]} \| p(\cdot | \gS)\Big)}{\Bigg\|\nabla_\vf \KL \Big(q_{[\mI + \vf]}\| p(\cdot | \gS)\Big)\Bigg\|_{\gH^d}}.
\end{equation}

Recall that our goal is to find a sequence of functions $\{\vf_k\}_k \subset \gH^d$ that converges toward the optimal solution $\vf^*$. With the sequence $\{\vf_k\}_k$, we can obtain the flow of distributions $\{q^{(k)}\}_k$, in which $q^{(k)} = q_{[\mI + \vf_k]}$, that gradually approaches the optimal solution of Eq. \ref{eq: optimization problem}. Motivated by Eq.~\ref{eq: sharpness perturbation}, we propose the following \textit{functional sharpness-aware} update procedure:


\begin{align}
    \label{eq: functional perturbation}
    \hat{\vf}^*_k &= \rho\frac{\nabla_{\vf}\KL\Big(q_{[\mI + \vf]} \| p(\cdot|\gS)\Big)\Big|_{\vf = \vf_k}}{\Big\| \nabla_{\vf}\KL\Big(q_{[\mI + \vf]} \| p(\cdot|\gS)\Big)\Big|_{\vf = \vf_k} \Big\|_{\gH^d}}\\ 
    \label{eq: functional descend}
    \vf_{k+1} &=\vf_k -\epsilon \nabla_{\vf}\KL\Big(q_{[\mI + \vf]} \| p(\cdot|\gS)\Big) \Big|_{\vf = \vf_k +  \hat{\vf}^*_k} \\
    q^{(k+1)} & = q_{[\mI + \vf_{k+1}]}. 
\end{align}

To implement this iterative procedure, we must work with the functional gradient terms. For this, we rely on the following lemma, with the proof provided in Appendix B of \citet{svgd}:
\begin{lemma}
\label{theorem: functional gradient on RKHS}
Let $\mF[\vf]=\KL(q_{[\mI + \vf]}\|p(\cdot | \gS))$. When $\|\vf\|$ is sufficiently small,
    \begin{align}
        &\nabla_{\vf} \mF[\vf] \\
        &\approx -\mathbb{E}_q[\nabla_\vtheta \log p(\vtheta+\vf(\vtheta)| \gS)k(\vtheta,\cdot)+\nabla_\vtheta k(\vtheta,\cdot)].
    \end{align}
\end{lemma}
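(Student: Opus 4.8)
The plan is to follow the variational-calculus computation of \citet{svgd}, adapted to a nonzero base point $\vf$. First I would perturb the transport map to $\mT = \mI + \vf + \epsilon \vg$ for an arbitrary direction $\vg \in \gH^d$ and compute $\tfrac{d}{d\epsilon}\mF[\vf + \epsilon\vg]\big|_{\epsilon = 0}$, since by the definition of the functional gradient in Eq.~\ref{eq: functional gradient} this derivative equals $\left<\nabla_\vf \mF[\vf], \vg\right>_{\gH^d}$. Matching the resulting expression against this inner product, for every $\vg$, then identifies $\nabla_\vf \mF[\vf]$.

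By the change-of-variables formula for $q_{[\mT]}$ together with the standard entropy-transformation identity, I would rewrite
\[
\mF[\vf] = \mathbb{E}_q[\log q(\vtheta)] - \mathbb{E}_q\big[\log|\det \nabla_\vtheta \mT(\vtheta)|\big] - \mathbb{E}_q\big[\log p(\mT(\vtheta)| \gS)\big],
\]
where the first term is independent of $\mT$ and drops out under differentiation. For the log-determinant term I would invoke Jacobi's formula: with $\nabla_\vtheta \mT = \mI + \nabla\vf + \epsilon \nabla\vg$, one gets $\tfrac{d}{d\epsilon}\log|\det \nabla_\vtheta\mT|\big|_{\epsilon=0} = \Tr\big((\mI + \nabla\vf)^{-1}\nabla\vg\big)$. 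This is exactly where the hypothesis that $\|\vf\|$ is sufficiently small enters: I would linearize $(\mI + \nabla\vf)^{-1} \approx \mI$, so the contribution becomes $-\mathbb{E}_q[\Tr(\nabla\vg(\vtheta))]$, and this approximation is the source of the ``$\approx$'' in the statement. For the cross term the chain rule gives $\tfrac{d}{d\epsilon}\log p(\vtheta + \vf(\vtheta) + \epsilon\vg(\vtheta)| \gS)\big|_{\epsilon=0} = \nabla_\vtheta \log p(\vtheta + \vf(\vtheta)| \gS)^\top \vg(\vtheta)$.

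The final step converts these pointwise quantities into $\gH^d$ inner products via the reproducing property of the matrix-valued kernel $K(\vtheta,\vtheta') = k(\vtheta,\vtheta')\mI$. Writing each coordinate as $g_i(\vtheta) = \left<g_i, k(\vtheta, \cdot)\right>_\gH$ and $\partial_{\theta_i} g_i(\vtheta) = \left<g_i, \partial_{\theta_i} k(\vtheta, \cdot)\right>_\gH$ (the reproducing property for derivatives), I obtain $\Tr(\nabla\vg(\vtheta)) = \left<\vg, \nabla_\vtheta k(\vtheta, \cdot)\right>_{\gH^d}$ and $\nabla_\vtheta \log p \cdot \vg = \left<\vg, \nabla_\vtheta \log p(\vtheta + \vf(\vtheta)| \gS)\, k(\vtheta, \cdot)\right>_{\gH^d}$. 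Collecting the two contributions and pulling the expectation inside the inner product yields
\[
\left<\nabla_\vf \mF[\vf], \vg\right>_{\gH^d} \approx -\left<\vg,\ \mathbb{E}_q\big[\nabla_\vtheta \log p(\vtheta + \vf(\vtheta)| \gS)\, k(\vtheta, \cdot) + \nabla_\vtheta k(\vtheta, \cdot)\big]\right>_{\gH^d},
\]
and since $\vg$ is arbitrary, the claimed formula for $\nabla_\vf \mF[\vf]$ follows.

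The main obstacle, and the only place the derivation departs essentially from \citet{svgd} (who work at the base point $\vf = 0$), is controlling the log-determinant term at a nonzero $\vf$: one must justify the linearization $(\mI + \nabla\vf)^{-1} \approx \mI$, which requires $\nabla\vf$ to be small in operator norm. This is guaranteed when $\|\vf\|_{\gH^d}$ is small and the kernel is smooth, so that $\mI + \nabla\vf$ is invertible by the inverse function theorem, precisely the regime already assumed in the main text. The remaining technical points, namely interchanging differentiation with the expectation and ensuring $\partial_{\theta_i} k(\vtheta, \cdot) \in \gH$, are routine under standard smoothness and integrability assumptions on $k$ and $\log p$.
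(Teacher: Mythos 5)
Your proposal is correct and takes essentially the same route as the derivation the paper relies on: the paper offers no independent proof of this lemma, deferring instead to Appendix B of \citet{svgd}, and your computation (change of variables for the KL, Jacobi's formula followed by the linearization $(\mI+\nabla\vf)^{-1}\approx\mI$, then the reproducing property to identify the gradient) is precisely that argument adapted to a nonzero base point $\vf$. In particular, you correctly pinpoint the dropped Jacobian-inverse factor on the $\nabla_\vtheta k(\vtheta,\cdot)$ term as the sole source of the ``$\approx$'' in the statement.
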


Denote the right-hand side of the equation above $\mD(\vf)$. Substituting into Eq.~(\ref{eq: functional perturbation}) and~(\ref{eq: functional descend}), the iterative procedure described from equations~(\ref{eq: functional perturbation})-(\ref{eq: functional descend}) becomes:
\begin{align}
    \hat{\vf}^*_k &= \rho\frac{\mD(\vf_k)}{\|\mD(\vf_k)\|_{\gH^d}},
        \label{first step}\\
    \vf_{k+1} &= \vf_k -\epsilon \mD(\vf_k + \hat{\vf}^*_k),    \label{second step}\\
    q^{(k+1)} &=q_{[\mI + \vf_{k+1}]}.
        \label{third step}
\end{align}
Even though we do not have access to $p(\vtheta | \gS)$, we can compute $\nabla_\vtheta \log p(\vtheta | \gS)$ because $\nabla_\vtheta \log p(\vtheta | \gS) = \nabla_\vtheta \log p(\vtheta) -   \nabla_\vtheta \gL_\gS(\vtheta)$. To implement the procedure above, we first draw a set of $m$ particles $\{\vtheta_i^{(0)}\}^m_{i=1}$ on the model space from the initial density, and then iteratively update the particles with an empirical version of $\mD(\vf)$. Consequently, we obtain the practical procedure summarized in Algorithm \ref{alg: FHBI}, which deterministically transports the set of particles to match the empirical posterior distribution $p(\vtheta | \gS)$, therefore match the general posterior $p(\vtheta | \gD)$ as supported by Theorem \ref{theorem: pac-bayes}. In Algorithm \ref{alg: FHBI}, at each iteration $k$, we have $m$ particles $\{\vtheta^{(k)}_j\}_{j=1}^m$. Eq. \ref{first step} computes the $m$ ascend steps $\hat{\varepsilon}_i^{(k)}$; then, Eq. \ref{second step} and Eq. \ref{third step} use these ascend steps to transport the $m$ model particles to $\{\vtheta^{(k+1)}_j\}_{j=1}^m$. It is noteworthy that FHBI is a generalization of both SVGD and SAM. In particular, if we set $\rho = 0$, we get SVGD; when $m=\textsc{\#particles} = 1$, we obtain SAM. 

\begin{algorithm}[tb]
   \caption{\textsc{Flat Hilbert Bayesian Inference (FHBI)}}
   \label{alg: FHBI}
\begin{algorithmic}
   \STATE {\bfseries Input:} Initial particles $\{\vtheta^{(0)}_i\}^m_{i=1}$, number of epochs $N$, step size $\rho > 0$
\STATE {\bfseries Output:} A set of particles $\{\vtheta_i\}^m_{i=1}$ that approximates the population posterior distribution $p(\vtheta|\gD)$
   \FOR{iteration $k$}
    \STATE $\hat{\varepsilon}_i^{(k)} \leftarrow \rho\frac{\mathbb{\phi}(\vtheta^{(k)}_i)}{\|\phi(\vtheta_i^{(k)})\|}$ where  $\mathbb{\phi}(\vtheta) = -\frac{1}{n}\sum_{j=1}^m[k(\vtheta, \vtheta_j^{(k)})\nabla_{\vtheta_j^{(k)}}\log p(\vtheta_j^{(k)} | \gS) + \nabla_{\vtheta_j^{(k)}} k(\vtheta, \vtheta_j^{(k)})]$ 
    \STATE $\vtheta_i^{(k+1)} \leftarrow \vtheta_i^{(k)} - \epsilon_i \psi(\vtheta_i^{(k)}, \hat{\varepsilon}^{(k)}_i)$ \\ 
    where $\psi(\vtheta, \varepsilon) = -\frac{1}{n}\sum_{j=1}^m[k(\vtheta, \vtheta_j^{(k)})\nabla_{\vtheta_j^{(k)}}\log p(\vtheta_j^{(k)}+\varepsilon| \gS) + \nabla_{\vtheta_j^{(k)}} k(\vtheta, \vtheta_j^{(k)})]$.
   \ENDFOR
\end{algorithmic}
\end{algorithm}

\label{section: connections to SAM}

\textbf{Interactive gradient directions and Connections to SAM.} To gain further insight into the mechanism of FHBI and its connections to SAM, consider the term $\nabla_{\vtheta_j}\log(\vtheta_j+\hat{\varepsilon}_i)$ in the descending step, which is related to $\nabla_{\vtheta_j} \gL_\gS(\vtheta_j + \hat{\varepsilon}_i)$. 

The perturbed loss can be approximated as:
\begin{equation*}
\gL_\gS(\vtheta_j+\hat{\varepsilon}_i)\approx\gL_\gS(\vtheta_j) + \hat{\varepsilon}_i\nabla_{\vtheta_j} \gL_\gS(\vtheta_j),
\end{equation*} 
 where $\hat{\varepsilon}_i$ involves the average $\sum_{k = 1}^m k(\vtheta_k, \vtheta_j)\nabla_{\vtheta_k} \gL_\gS (\vtheta_k)$. Consequently, the gradient of this perturbed loss indicates a direction that simultaneously minimizes $\|\nabla_{\vtheta_j} \gL_\gS (\vtheta_j)\|^2$ - which approximates the sharpness of the $j-$th particle, as discussed by \citet{sam} - and $\nabla_{\vtheta_j} \gL_\gS (\vtheta_j) \cdot \nabla_{\vtheta_k} \gL_\gS (\vtheta_k)$ for all $j, k$, which reflects the angular similarity in the directions of the two particles. Thus, in addition to minimizing the sharpness of each particle, the first term of the descent step acts as an \textit{angular repulsive force}, promoting diverse traveling directions for the particles. Besides, as discussed by \citet{svgd}, the second term acts as a \textit{spatial repulsive force}, driving the particles apart to prevent them from collapsing into a single mode. Consequently, FHBI is not a straightforward extension of SAM to multiple independent particles; it enables the sharpness and gradient directions of the particles to interact with one another. This insight on our algorithm is summarized in Figure \ref{fig:schematic}. In Section \ref{section: ablation}, we empirically demonstrate that, compared to SVGD, FHBI not only effectively minimizes particle-wise sharpness and loss values but also fosters greater diversity in the travel directions of the particles during training. This increased directional diversity, combined with the kernel gradient term, further mitigates the risk of particles collapsing into a single mode and improve the final performance as presented in Section \ref{section: experiments}.

\section{Experiments}
\label{section: experiments}
\begin{table*}[ht]
\centering
\caption{\texttt{VTAB-1K} results evaluated on Top-1 accuracy. All methods are applied to finetune the same set of LoRA parameters on ViT-B/16 pre-trained with \texttt{ImageNet-21K} dataset.}
\renewcommand\arraystretch{1.05 }
\resizebox{\linewidth}{!}{
\begin{tabular}{cccccccc|cccc|cccccccc|c}
\hline
\cellcolor[HTML]{FFFFFF}&
 \multicolumn{7}{c|}{\textbf{Natural}}&
 \multicolumn{4}{c|}{\textbf{Specialized}}&
 \multicolumn{8}{c|}{\textbf{Structured}}&
  \cellcolor[HTML]{FFFFFF} \\ 
  \cmidrule(lr){2-8} \cmidrule(lr){9-12} \cmidrule(lr){13-20}
\rowcolor[HTML]{FFFFFF} 
\multicolumn{1}{c}{\cellcolor[HTML]{FFFFFF}\textbf{Method}} &
  \cellcolor[HTML]{FFFFFF}\rot{CIFAR100} &
  \cellcolor[HTML]{FFFFFF}\rot{Caltech101} &
  \cellcolor[HTML]{FFFFFF}\rot{DTD} &
  \cellcolor[HTML]{FFFFFF}\rot{Flower102} &
  \cellcolor[HTML]{FFFFFF}\rot{Pets} &
  \cellcolor[HTML]{FFFFFF}\rot{SVHN} &
  \cellcolor[HTML]{FFFFFF}\rot{Sun397} &
  \cellcolor[HTML]{FFFFFF}\rot{Camelyon} &
  \cellcolor[HTML]{FFFFFF}\rot{EuroSAT} &
  \cellcolor[HTML]{FFFFFF}\rot{Resisc45} &
  \cellcolor[HTML]{FFFFFF}\rot{Retinopathy} &
  \cellcolor[HTML]{FFFFFF}\rot{Clevr-Count} &
  \cellcolor[HTML]{FFFFFF}\rot{Clevr-Dist} &
  \cellcolor[HTML]{FFFFFF}\rot{DMLab} &
  \cellcolor[HTML]{FFFFFF}\rot{KITTI} &
  \cellcolor[HTML]{FFFFFF}\rot{dSpr-Loc} &
  \cellcolor[HTML]{FFFFFF}\rot{dSpr-Ori} &
  \cellcolor[HTML]{FFFFFF}\rot{sNORB-Azi} &
  \cellcolor[HTML]{FFFFFF}\rot{sNORB-Ele} &
  \cellcolor[HTML]{FFFFFF}\textbf{AVG} \\ \hline
\rowcolor[HTML]{FFFFFF} 
\multicolumn{1}{c}{\cellcolor[HTML]{FFFFFF} FFT} &
  68.9 &
  87.7 &
  64.3 &
  97.2 &
  86.9 &
  \cellcolor[HTML]{9AFF99}\textbf{87.4} &
  38.8 &
  79.7 &
  95.7 &
  84.2 &
  73.9 &
  56.3 &
  58.6 &
  41.7 &
  65.5 &
  57.5 &
  46.7 &
  25.7 &
  29.1 &  
  \cellcolor[HTML]{FFFFFF}65.6 \\ 
\rowcolor[HTML]{FFFFFF} 
\cellcolor[HTML]{FFFFFF}AdamW &
  67.1 &
  90.7 &
  68.9 &
  98.1 &
  90.1 &
  84.5 &
  54.2 &
  84.1 &
  94.9 &
  84.4 &
  73.6 &
  82.9 &
  69.2 &
  49.8 &
  78.5 &
  \cellcolor[HTML]{9AFF99}$\bf{75.7}$ &
  47.1 &
  31.0 &
  \cellcolor[HTML]{9AFF99}$\bf{44.0}$ &

  \cellcolor[HTML]{FFFFFF}72.0 \\ 
\rowcolor[HTML]{FFFFFF} 

\cellcolor[HTML]{FFFFFF}SAM &
  72.7 &
  90.3 &
  71.4 &
  99.0 &
  90.2 &
  84.4 &
  52.4 & 
  82.0 &
  92.6 &
  84.1 &
  74.0 &
  76.7 &
  68.3 &
  47.9 &
  74.3 &
  71.6 &
  43.4 &
  26.9 &
  39.1 &
  
  \cellcolor[HTML]{FFFFFF}70.5 \\ \hline
\rowcolor[HTML]{FFFFFF}

\cellcolor[HTML]{FFFFFF}DeepEns &
  69.1 &
  88.9 &
  67.7 &
  98.9 &
  90.7 &
  85.1 &
  54.5 &
  82.6 &
  94.8 &
  82.7 &
  75.3 &
  46.6 &
  47.1 &
  47.4& 
    68.2  & 
    71.1  & 
  36.6 & 
    30.1  & 
    35.6 &
  \cellcolor[HTML]{FFFFFF}67.0 \\ 
\rowcolor[HTML]{FFFFFF} 
\cellcolor[HTML]{FFFFFF}{\color[HTML]{000000} BayesTune} &
  67.2 &
  91.7 &
  69.5 &
  99.0 &
  90.7 &
  86.4 &
  54.7 &
  84.9 &
  \cellcolor[HTML]{9AFF99}\textbf{95.3} &
  84.1 &
  75.1 &
  \cellcolor[HTML]{9AFF99}\textbf{82.8} &
  68.9 &
  49.7 &
  79.3 &
  74.3 &
  46.6 &
  30.3 &
  42.8 &
  \cellcolor[HTML]{FFFFFF}72.2\\ 
  \rowcolor[HTML]{FFFFFF}
  \cellcolor[HTML]{FFFFFF}{\color[HTML]{000000} SGLD} &
  68.7 &
  91.0 &
  67.0 &
  98.6 &
  89.3 &
  83.0 &
  51.6 &
  81.2 &
  93.7 &
  83.2 &
  76.4 &
  80.0 &
  70.1 &
  48.2 &
  76.2 &
  71.1 &
  39.3 &
  31.2 &
  38.4 &
  \cellcolor[HTML]{FFFFFF}70.4\\ 
  \rowcolor[HTML]{FFFFFF}
 
   \cellcolor[HTML]{FFFFFF}{\color[HTML]{000000} 
   
    SADA-JEM} &
   70.3 &
   91.9 &
   70.2 &
   98.2 &
   91.2 &
   85.6 &
   54.7 &
   84.3 &
   94.1 &
   83.4 &
   77.0 &
   79.9 &
   72.1 &
   51.6 &
   79.4 &
   70.7 &
   45.3 &
   29.6 &
   40.1 &
  \cellcolor[HTML]{FFFFFF} 72.1\\ 
  
  \rowcolor[HTML]{FFFFFF}
   \cellcolor[HTML]{FFFFFF}{\color[HTML]{000000} 
   
    SA-BNN} &
   65.1 &
   91.5 &
   71.0 &
   98.9 &
   89.4 &
   89.3 &
   55.2 &
   83.2 &
   94.5 &
   86.4 &
   75.2 &
   61.4 &
   63.2 &
   40.0 &
   71.3 &
   64.5 &
   34.5 &
   27.2 &
   31.2 &
  \cellcolor[HTML]{FFFFFF} 68.1\\ 
  
  \rowcolor[HTML]{FFFFFF}
  \cellcolor[HTML]{FFFFFF}SVGD &
  71.3 &
  90.2 &
  71.0 &
  98.7 &
  90.2 &
  84.3 &
  52.7 &
  83.4 &
  93.2 &
  86.7 &
  75.1 &
  75.8 &
  70.7 &
  49.6 &
  79.9 &
  69.1 &
  41.2 &
  30.6 &
  33.1 &
  \cellcolor[HTML]{FFFFFF}70.9 \\ \hline
\rowcolor[HTML]{FFFFFF} 
 &
  \cellcolor[HTML]{9AFF99}$\mathbf{74.1}$ &
  \cellcolor[HTML]{9AFF99}$\mathbf{93.0}$ &
  \cellcolor[HTML]{9AFF99}$\mathbf{74.3}$ &
  \cellcolor[HTML]{9AFF99}$\mathbf{99.1}$ &
  \cellcolor[HTML]{9AFF99}$\mathbf{92.4}$ &
  87.3 & 
  \cellcolor[HTML]{9AFF99}$\mathbf{56.5}$ & 
  \cellcolor[HTML]{9AFF99}$\mathbf{85.3}$ &
  95.0 &
  \cellcolor[HTML]{9AFF99}$\mathbf{87.2}$ &
  \cellcolor[HTML]{9AFF99}$\mathbf{79.6}$ &
   80.1 &
  \cellcolor[HTML]{9AFF99}$\bf{72.3}$ & 
\cellcolor[HTML]{9AFF99}$\bf{52.2}$ & 
  \cellcolor[HTML]{9AFF99}$\bf{80.4}$ & 
   72.8& 
    \cellcolor[HTML]{9AFF99}\textbf{51.2}&
   \cellcolor[HTML]{9AFF99}$\bf{31.9}$&
   41.3&

  \cellcolor[HTML]{9AFF99}$\bf{73.7}$ \\ 
 \rowcolor[HTML]{FFFFFF}
 \multirow{-2}{*}{\textbf{FHBI}}
  \cellcolor[HTML]{FFFFFF} &
    (.17)&
    (.42)&
    (.15)&
    (0.20)&
    (0.21)&
    (.52) &
    (.12)&
    (.31) &
    (.57) &
    (.21) &
    (.20) &
    (.16) &
    (.27) &
    (.47) &
    (.31) &
    (.50) &
    (.32) &
    (.36) &
    (.59) & 
    
  \cellcolor[HTML]{FFFFFF}\\ \hline
  \label{table: accuracy}
\end{tabular}
}
\end{table*}

\begin{figure*}[h]
    \centering
    \begin{subfigure}[b]{0.27\textwidth}
        \centering
        \includegraphics[width=\textwidth]{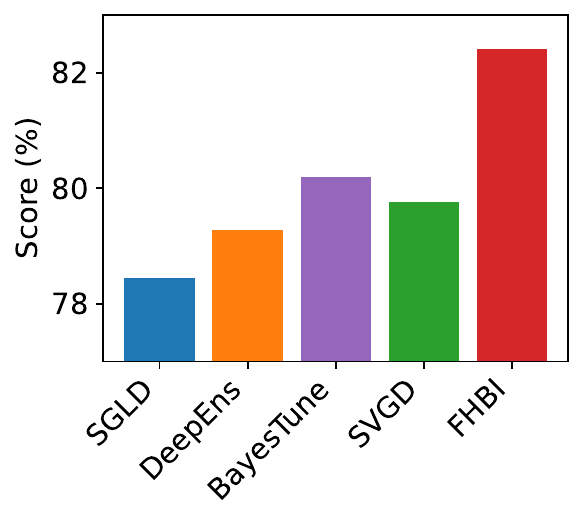} 
        \label{fig:ablate_LG}
    \end{subfigure}
    \hfill
    \begin{subfigure}[b]{0.27\textwidth}
        \centering
        \includegraphics[width=\textwidth]{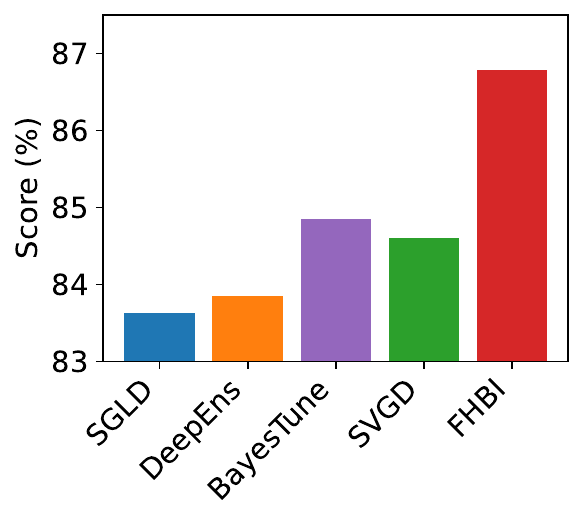} 
        \label{fig:ablateOT}
    \end{subfigure}
    \hfill
    \begin{subfigure}[b]{0.27\textwidth}
        \centering
        \includegraphics[width=\textwidth]{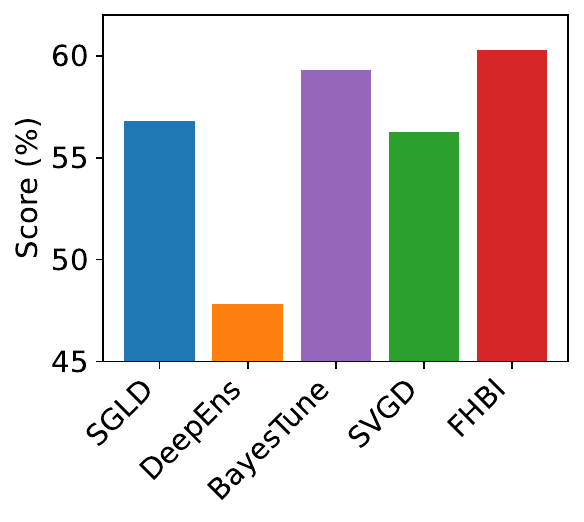} 
        \label{fig:ablate_alphabeta}
    \end{subfigure}

    \caption{Domain-wise average scores on Natural (left), Specialized (middle), and Structured (right) datasets. FHBI performs best in all three domains compared to the Bayesian inference baselines.}
    \label{fig: accuracy by domain}
\end{figure*}

\paragraph{Applications to Model Fine-tuning.} Bayesian inference methods have promising applications in model fine-tuning. We are given a pre-trained model $\Phi$ in standard fine-tuning scenarios. The objective is to find the optimal parameters $\vtheta = \Phi + \mathbf{\beta}$, where $\mathbf{\beta}$ represents an additional module, often lightweight and small relative to the full model. Several parameter-efficient finetuning strategies have been developed, including LoRA \citep{lora}, Adapter \citep{adapter}, and others. Our experiments focus on finetuning the ViT-B/16 architecture \citep{vit}, pre-trained with the \texttt{ImageNet-21K} dataset \citep{imagenet}, where $\beta$ is defined by the LoRA framework. For the Bayesian approaches, we aim to learn $m$ LoRA particles $\beta^{(i)}$ to obtain $m$ model instances $\vtheta^{(i)}$. The final output is then computed as the average of the outputs from all these model instances.
\begin{table*}[ht]
\centering
\caption{\texttt{VTAB-1K} results evaluated on the Expected Calibration Error (ECE) metric. All methods are applied to finetune the same set of LoRA parameters on ViT-B/16 pre-trained with \texttt{ImageNet-21K} dataset.}
\renewcommand\arraystretch{1.05}
\resizebox{\linewidth}{!}{
\begin{tabular}{cccccccc|cccc|cccccccc|c}
\hline
\cellcolor[HTML]{FFFFFF}&
 \multicolumn{7}{c|}{\textbf{Natural}}&
 \multicolumn{4}{c|}{\textbf{Specialized}}&
 \multicolumn{8}{c|}{\textbf{Structured}}&
  \cellcolor[HTML]{FFFFFF} \\ 
  \cmidrule(lr){2-8} \cmidrule(lr){9-12} \cmidrule(lr){13-20}
\rowcolor[HTML]{FFFFFF} 
\multicolumn{1}{c}{\cellcolor[HTML]{FFFFFF}\textbf{Method}} &
  \cellcolor[HTML]{FFFFFF}\rot{CIFAR100} &
  \cellcolor[HTML]{FFFFFF}\rot{Caltech101} &
  \cellcolor[HTML]{FFFFFF}\rot{DTD} &
  \cellcolor[HTML]{FFFFFF}\rot{Flower102} &
  \cellcolor[HTML]{FFFFFF}\rot{Pets} &
  \cellcolor[HTML]{FFFFFF}\rot{SVHN} &
  \cellcolor[HTML]{FFFFFF}\rot{Sun397} &
  \cellcolor[HTML]{FFFFFF}\rot{Camelyon} &
  \cellcolor[HTML]{FFFFFF}\rot{EuroSAT} &
  \cellcolor[HTML]{FFFFFF}\rot{Resisc45} &
  \cellcolor[HTML]{FFFFFF}\rot{Retinopathy} &
  \cellcolor[HTML]{FFFFFF}\rot{Clevr-Count} &
  \cellcolor[HTML]{FFFFFF}\rot{Clevr-Dist} &
  \cellcolor[HTML]{FFFFFF}\rot{DMLab} &
  \cellcolor[HTML]{FFFFFF}\rot{KITTI} &
  \cellcolor[HTML]{FFFFFF}\rot{dSpr-Loc} &
  \cellcolor[HTML]{FFFFFF}\rot{dSpr-Ori} &
  \cellcolor[HTML]{FFFFFF}\rot{sNORB-Azi} &
  \cellcolor[HTML]{FFFFFF}\rot{sNORB-Ele} &
  \cellcolor[HTML]{FFFFFF}\textbf{AVG} \\ \hline
\rowcolor[HTML]{FFFFFF} 
\multicolumn{1}{c}{\cellcolor[HTML]{FFFFFF} FFT} &
  0.29 &
  0.23 &
  0.20 &
  0.13 &
  0.27 &
  0.19 &
  0.45 &
  0.21 &
  0.13 &
  0.18 &
  0.17 &
  0.41 &
  0.44 &
  0.42 &
  0.22 &
  \cellcolor[HTML]{9AFF99}\textbf{0.14} &
  0.23 &
  0.24 &
  0.40 &
  \cellcolor[HTML]{FFFFFF}0.26 \\ 
\rowcolor[HTML]{FFFFFF} 
\multicolumn{1}{c}{\cellcolor[HTML]{FFFFFF}AdamW }&
  0.38 &
  0.19 &
  0.18 &
  0.05 &
  0.09 &
  0.10 &
  0.14 &
  0.11 &
  0.09 &
  0.12 &
  0.11 &
  \cellcolor[HTML]{9AFF99}\textbf{0.12} &
  0.19 &
  0.34 &
  0.18 &
  0.14 &
  0.21 &
  0.18 &
  0.31 &
  \cellcolor[HTML]{FFFFFF}0.17 \\ 
\rowcolor[HTML]{FFFFFF} 

\multicolumn{1}{c}{\cellcolor[HTML]{FFFFFF}SAM} &
  0.21 &
  0.25 &
  0.20 &
  0.11 &
  0.12 &
  0.15 &
  \cellcolor[HTML]{9AFF99}\textbf{0.14} &
  0.17 &
  0.16 &
  0.14 &
  0.09 &
  0.12 &
  0.17 &
  0.24 &
  0.16 &
  0.21 &
  0.19 &
  0.13 &
  0.16 &
  \cellcolor[HTML]{FFFFFF}0.16 \\ \hline
\rowcolor[HTML]{FFFFFF}

\multicolumn{1}{c}{\cellcolor[HTML]{FFFFFF}DeepEns} &
  0.24 &
  0.12 &
  0.22 &
  0.04 &
  0.10 &
  0.13 &
  0.23 &
  0.16 &
  0.07 &
  0.15 &
  0.21 &
  0.31 &
  0.32 &
  0.36 &
  \cellcolor[HTML]{9AFF99}$\mathbf{0.13}$ & 
  0.32 & 
  0.31& 
  0.16 & 
  0.29 &
  \cellcolor[HTML]{FFFFFF}0.20 \\ 
\rowcolor[HTML]{FFFFFF} 
\multicolumn{1}{c}{\cellcolor[HTML]{FFFFFF}{\color[HTML]{000000} BayesTune}} &
  0.32 &
  \cellcolor[HTML]{9AFF99}\textbf{0.08}&
  0.20 &
  \cellcolor[HTML]{9AFF99}\textbf{0.03} &
  0.85 &
  0.12 &
  0.22 &
  0.13 &
  0.07 &
  0.13 &
  0.22 &
  0.12 &
  0.23 &
  0.30 &
  0.24 &
  0.28 &
  0.28 &
  0.31 &
  0.26 &
  \cellcolor[HTML]{FFFFFF}0.23\\ 
   \rowcolor[HTML]{FFFFFF}
   \multicolumn{1}{c}{\cellcolor[HTML]{FFFFFF}{\color[HTML]{000000} SGLD}} &
  0.26 & 
  0.20 & 
  0.17 & 
  0.05& 
  0.18 & 
  0.14 & 
  0.23 & 
  0.18 & 
  0.09 & 
  \cellcolor[HTML]{9AFF99}\textbf{0.12} & 
  0.32 & 
  0.26 & 
  0.29 & 
  0.21 & 
  0.26 & 
  0.42 & 
  0.39 & 
  \cellcolor[HTML]{9AFF99}$\bf{0.11}$ & 
  0.24 & 
  \cellcolor[HTML]{FFFFFF}0.22\\ 
  \rowcolor[HTML]{FFFFFF}
   \cellcolor[HTML]{FFFFFF}{\color[HTML]{000000} 
   
    SADA-JEM} &
   0.22 &
   0.11 &
   0.20 &
   0.05 &
   0.13 &
   0.16 &
   0.18 &
   0.15 &
   0.21 &
   0.23 &
   0.26 &
   0.19 &
   0.20 &
   0.25 &
   0.27 &
   0.35 &
   0.20 &
   0.14 &
   0.13 &
  \cellcolor[HTML]{FFFFFF} 0.19\\ 
  
  \rowcolor[HTML]{FFFFFF}
   \cellcolor[HTML]{FFFFFF}{\color[HTML]{000000} 
   
    SA-BNN} &
   0.22 &
   0.08 &
   0.19 &
   0.15 &
   0.12 &
   0.12 &
   0.24 &
   0.13 &
   0.06 &
   0.12 &
   0.18 &
   0.14 &
   0.21 &
   0.22 &
   0.24 &
   0.25 &
   0.41 &
   0.46 &
   0.34 &
  \cellcolor[HTML]{FFFFFF}0.20\\ 
  
  \rowcolor[HTML]{FFFFFF}
  \multicolumn{1}{c}{\cellcolor[HTML]{FFFFFF}SVGD} &
  0.20 & 
  0.13 & 
  0.19 & 
  0.04 & 
  0.16 & 
  0.09 & 
  0.20 & 
  0.15 & 
  0.11 & 
  0.13 & 
  0.12 & 
  0.17 & 
  0.21 & 
  0.30 & 
  0.18 & 
  0.21 & 
  0.25 & 
  0.14 & 
  0.26 & 
  \cellcolor[HTML]{FFFFFF}0.18 \\ \hline
\rowcolor[HTML]{FFFFFF} 
\multicolumn{1}{c}{\textbf{FHBI}} &
  \cellcolor[HTML]{9AFF99}\textbf{0.19} & 
  0.10 & 
  \cellcolor[HTML]{9AFF99}\textbf{0.16} & 
  0.06 & 
  \cellcolor[HTML]{9AFF99}\textbf{0.06} & 
  \cellcolor[HTML]{9AFF99}\textbf{0.09} & 
  0.16 & 
  \cellcolor[HTML]{9AFF99}\textbf{0.09} & 
  \cellcolor[HTML]{9AFF99}\textbf{0.05} & 
   0.12 & 
  \cellcolor[HTML]{9AFF99}\textbf{0.08} & 
   0.14& 
  \cellcolor[HTML]{9AFF99}\textbf{0.15} & 
  \cellcolor[HTML]{9AFF99}\textbf{0.21} & 
  0.15 & 
   0.16 & 
   \cellcolor[HTML]{9AFF99}\textbf{0.18 }& 
   0.11 & 
   \cellcolor[HTML]{9AFF99}\textbf{0.07}& 
  \cellcolor[HTML]{9AFF99}$\bf{0.12}$ \\ \hline
  \label{table: ece}
\end{tabular}
}
\end{table*}

\paragraph{Experimental Details.} To assess the efficacy of FHBI, we experiment on \texttt{VTAB-1K} \citep{vtab}, a challenging image classification/prediction benchmark consisting of 19 datasets from various domains. \texttt{VTAB-1K} covers various tasks across different semantics and object categories. The datasets are organized into Natural, Specialized, and Structured domains. We compared FHBI against nine baselines with three deterministic finetuning strategies, including full finetuning, AdamW, and SAM, and four Bayesian inference techniques, including Bayesian Deep Ensembles \citep{deepens}, BayesTune \citep{bayestune}, Sharpness-Aware Bayesian Neural Network (SA-BNN) \citep{va2023_flat_Bayes}, Sharpness-aware Joint Energy-based Model (SADA-JEM) \citep{sada_jem}, Stochastic Gradient Langevin Dynamics (SGLD) \citep{sgld}, and Stein Variational Gradient Descent (SVGD) \citep{svgd}. The experiments were conducted on a single Tesla V100 GPU. Appendix \ref{appendix: experimental details} provides more details regarding the chosen hyperparameters.


\paragraph{Experimental Results.} We present the classification accuracy results in Table \ref{table: accuracy}. FHBI notably improves upon the baselines, outperforming them in most settings. Compared to other particle sampling methods, including SGLD and SVGD, FHBI consistently performs better across all settings. Moreover, FHBI improves upon SAM by a margin of $3.2 \%$, highlighting the advantages of using multiple particles with the underlying interactive gradient directions as previously discussed in Section \ref{section: connections to SAM}. Additionally, as illustrated in Figure \ref{fig: accuracy by domain}, FHBI shows the highest performance across all three domains, further solidifying its advantage over the Bayesian inference baselines. Besides, FHBI also outperforms SA-BNN, which resembles the combination of SVGD and SAM, as well as SADA-JEM, which resembles SGLD and SAM, highlighting that FHBI is substantially different from a combination of SVGD and SAM as discussed in Section \ref{section: connections to SAM} and in the ablation study in Section \ref{section: ablation}.


To further assess the robustness of FHBI, we evaluate the Expected Calibration Error (ECE) of each setting. This score measures the maximum discrepancy between the model's accuracy and confidence. As indicated in Table \ref{table: ece}, even though there is typically a trade-off between accuracy and ECE, our approach achieves a good balance between the ECE and the classification accuracy. 

A potential concern regarding our algorithm is the additional computational overhead compared to SVGD due to the gradient ascent step. Nevertheless, in Appendix \ref{appendix: number of particles}, we have conducted an experiment to assess the immpact of different number of particles. Figure \ref{fig:wt_a} and \ref{table: accuracy by particles} shows that with a negotiable tradeoff in runtime, multiple particles result in significant performance improvements compared to a single particle, and we found that $\texttt{\#Particles} = 4$ provides an optimal balance between performance gains and computational overhead for this particular application. Another potential concern is the effect of the kernel choice $k$. Indeed, we have chosen the RBF kernel due to its effectiveness in the literature of kernel methods. As we have shown in Appendix \ref{appendix: kernel choices}, the performance gap between different kernel choices is insignificant, showing the robustness of FHBI with respect to the kernel choice.

\section{Ablation Studies: Particles Sharpness and Gradient Diversity}
\label{section: ablation}

\begin{figure}
    \centering
    \includegraphics[width=0.33\textwidth]{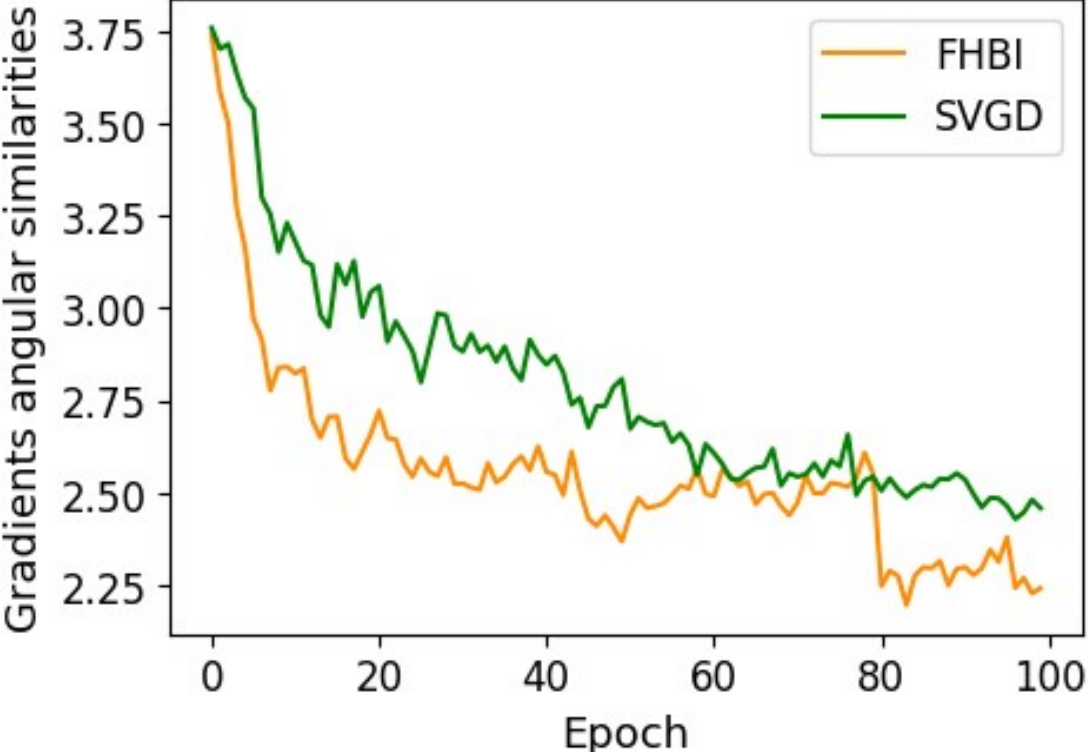}
    \caption{Gradients angular similarities with $m=4$. Lower values indicate greater angular diversity.}
    \label{fig:train}
\end{figure}

\begin{figure}[ht]
    \centering
    \begin{subfigure}[b]{0.49\linewidth}
        \centering
        \includegraphics[width=0.95\textwidth]{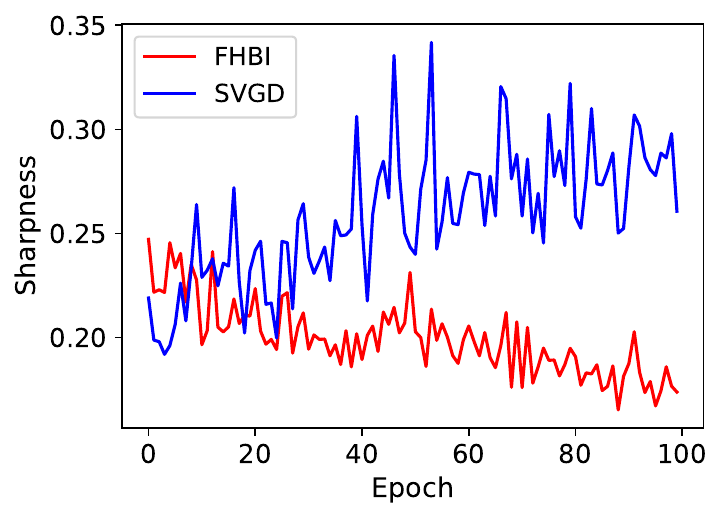} 
        \caption{Particle 1}
    \end{subfigure}
    \begin{subfigure}[b]{0.49\linewidth}
        \centering
        \includegraphics[width=0.95\textwidth]{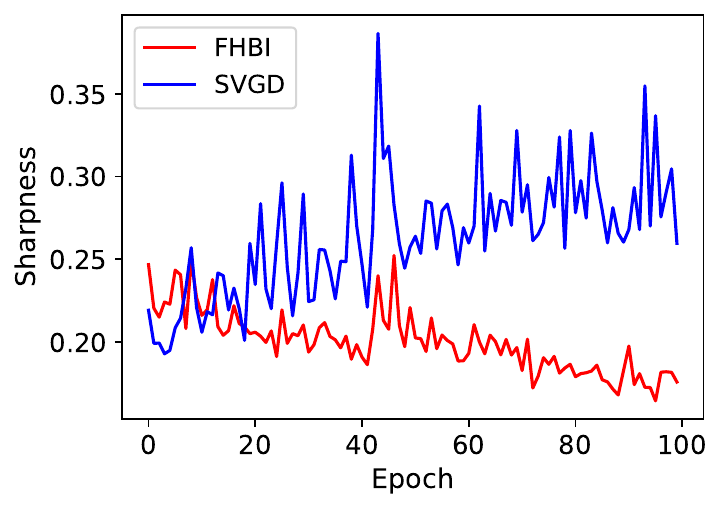} 
        \caption{Particle 2}
    \end{subfigure}

    \begin{subfigure}[b]{0.49\linewidth}
        \centering
        \includegraphics[width=0.95\textwidth]{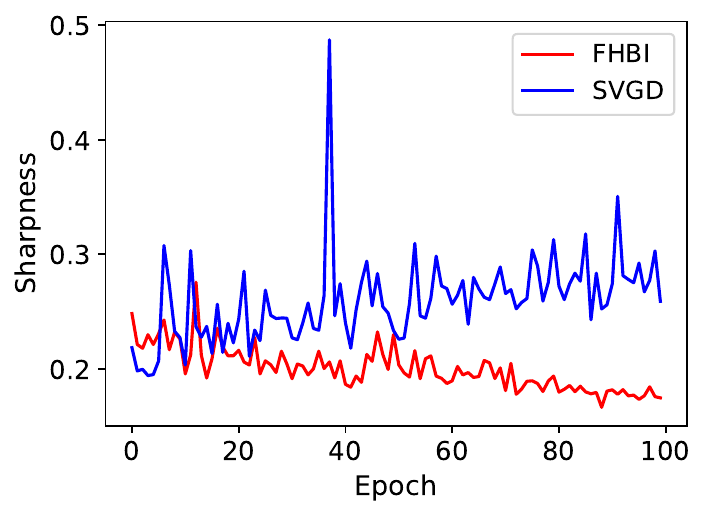} 
        
        \caption{Particle 3}
        
    \end{subfigure}
    \begin{subfigure}[b]{0.49\linewidth}
        \centering
        \includegraphics[width=0.95\textwidth]{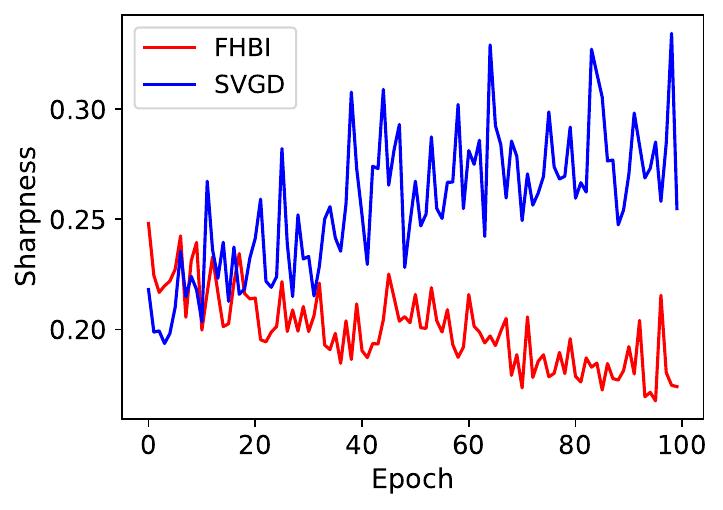} 
        \caption{Particle 4}
    \end{subfigure}
    \caption{Evolution of sharpness of four particles over 100 epochs with SVGD \textcolor{blue}{(blue)} and FHBI \textcolor{red}{(red)}}
\label{fig: ablation sharpness}
\end{figure}

As discussed in Section \ref{section: connections to SAM} and Section \ref{section: experiments}, FHBI shares implicit connections with SAM by minimizing particle-wise sharpness and diversifying travel directions, thereby improving the final performance. To empirically verify this hypothesis on the behavior of our algorithm, we contrast FHBI with SVGD  on the \texttt{KITTI} dataset. Four particles are initialized at the same location. We measured: \textbf{1)} the evolution of sharpness of each particle, defined as $\max_{\|\varepsilon\| \leq \rho} \gL_\gS(\vtheta + \varepsilon) - \gL_\gS(\vtheta)$ according to \citet{sam}, and \textbf{2)} the evolution of gradients angular diversity, quantified as the Frobenius norm of the covariance matrix formed by the particle gradients. Figure \ref{fig: ablation sharpness} shows that FHBI results in significantly lower and more stable sharpness evolution, encouraging less congruent gradient directions and promoting particles to explore diverse trajectories. Hence, FHBI reduces particle sharpness while promoting angular diversity, thereby improving generalization ability and avoiding overfitting by collapsing into a single mode.

\section{Conclusion}
We introduce Flat Hilbert Bayesian Inference (FHBI) to enhance generalization ability beyond previous Bayesian inference approaches. This algorithm is based on a theoretical framework that extends generalization principles from Euclidean spaces to the infinite-dimensional RKHS. Empirically, FHBI consistently demonstrated significant performance improvements over nine baseline methods.

\paragraph{Limitations and Future Directions.}

Similar to other particle-based methods, FHBI needs to store multiple models. Although it remains well-suited for fine-tuning since the additional modules are typically lightweight, this requirement is a memory bottleneck for larger models. Given that the variational inference approaches can alleviate this issue, an avenue for future research is to extend the concept of \textit{sharpness over functional spaces} introduced by our theory to the VI techniques to improve the generalization of these methods without storing multiple models.

\section*{Acknowledgments}
Trung Le was partly supported by ARC DP23 grant DP230101176 and by the Air Force Office of Scientific Research under award number FA9550-23-S-0001.

\section*{Impact Statement}
This paper presents work to advance the field of Machine Learning. There are potential societal consequences of our work, none of which we feel must be highlighted.

\bibliography{fhbi}
\bibliographystyle{icml2025}

\newpage
\appendix
\onecolumn

\begin{center}
\icmltitle{Supplement to ``Improving Generalization with Flat Hilbert Bayesian Inference''}
\end{center}

The Appendix will be structured as follows: Section \ref{appendix: proofs} presents the proofs to our theoretical results. Next, Section \ref{appendix: additional experiments} is dedicated for the additional experiments to assess the effectiveness of our method, including the effect of the number of particles, and the effect of kernel choices. Then, Section \ref{appendix: experimental details} discusses the related experimental details, including the chosen hyperparameters or data augmentations used in our experiments.

\section{All Proofs}
\label{appendix: proofs}

We introduce a few additional notations for the sake of the missing proofs of the main theoretical results. Given a RKHS $\gH$ equipped with the inner product $\left<\cdot, \cdot\right>_\gH$ and the norm operator $\|\cdot \|_\gH$. We define the single-sample loss function on the functional space $\gH$ to be a map:

\begin{align*}
    \Tilde{\ell}: \gH^d \times \gX \times \gY &\to \mathbb{R}\\
    (\vf, x, y) &\mapsto \Tilde{\ell}(\vf, (x, y)).
\end{align*}
 
Define the \textit{general functional loss} $\Tilde{L}_\gD(\vf) = \mathbb{E}_{(x, y) \sim \gD}[\Tilde{\ell}(\vf, (x, y))]$ and the \textit{empirical functional loss} $\Tilde{L}_\gS(\vf) = \sum_{i=1}^n\Tilde{\ell}(\vf, (x_i, y_i))$. Throughout the proof, we assume that the parameter space is bounded by $\|\vtheta\| \leq H$, the data is al bounded that $\|x\| \leq R, y \leq R$ for some $R, H \in \mathbb{R}$, and the loss function $\Tilde{l}$ is $s-$Hölder continuous. Indeed, this $s-$Hölder continuity property holds for a wide range of loss functions, including the MSE loss, Huber loss, Hinge loss, Smooth L1 loss, and Logistic loss.

We introduce the following lemmas that will be used throughout the proof of our main theorems.

\begin{lemma}[Approximation of RKHS functionals]
\label{lemma: RKHS approximation}

Let $d \in \mathbb{N}, \gX = [-R, R]^K$ for some $K \in \mathbb{R}$. Consider $\gK = \{f \in \gH: \|f\|_{\gH} \leq 1\}$ with $\gH$ induced by some Mercer kernel which is $\alpha$-Holder continuous for $\alpha \in (0, 1)$ with constant $C_K \geq 0$. Suppose $F$ is $s-$Holder continuous for $s \in (0, 1]$ with constant $C_f \geq 0$. There exists some $M_0 \in \mathbb{N}$ such that for every $M \in \mathbb{N}$ with $M > M_0$, by taking some fixed $\overline{t}= \{t_i\}$ with $N  \in \mathbb{N}$, we have a tanh neural network $\hat{G}$ with two hidden layers of widths at most $N(M-1)$ and $3\frac{N+1}{2}(5M)^N$ parameters satisfying 
    \begin{equation}
    \label{inequality: rkhs approximation}
        \sup_{f \in \gK}|F(f) - \hat{G}(f(\overline{t}))| \leq RC_F(\epsilon_K(\overline{t}))^s + \frac{7N^2R C_G}{M},
    \end{equation}
    with 
    \begin{equation*}
        C_G = C_F(1+\|K[\overline{t}]^{-1}\|_{op}\sqrt{N}C_K(h_{\overline{t}})^\alpha)^s,
    \end{equation*}
    where $K[\overline{t}]$ is the Gram matrix of $\overline{t}$.
\end{lemma}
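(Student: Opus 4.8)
The plan is to approximate the functional $F$ in two stages: first replace $F$ by an ordinary function of finitely many point evaluations of its argument, and then approximate that finite-dimensional function by a two-hidden-layer tanh network. Concretely, I would fix the nodes $\overline{t}=\{t_i\}_{i=1}^N$ and, given the evaluation vector $v = f(\overline{t}) \in \mathbb{R}^N$, reconstruct a surrogate $\Phi_v \in \gH$ from $v$ (for instance the kernel interpolant $\Phi_v = \sum_i \alpha_i k(\cdot,t_i)$ with coefficients $\alpha = K[\overline{t}]^{-1}v$). Setting $\tilde F(v) := F(\Phi_v)$ converts the problem of approximating the functional $F$ into that of approximating the function $\tilde F:\mathbb{R}^N \to \mathbb{R}$, and the triangle inequality splits the total error:
\begin{equation*}
|F(f) - \hat G(f(\overline{t}))| \le \underbrace{|F(f) - \tilde F(f(\overline{t}))|}_{\text{discretization}} + \underbrace{|\tilde F(f(\overline{t})) - \hat G(f(\overline{t}))|}_{\text{network}}.
\end{equation*}

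For the discretization term I would invoke the $s$-H\"older continuity of $F$ to write $|F(f)-F(\Phi_{f(\overline{t})})| \le C_F\,\|f-\Phi_{f(\overline{t})}\|^s$, and then bound the reconstruction error $\|f-\Phi_{f(\overline{t})}\|$ uniformly over $f \in \gK$ by the sampling resolution $\epsilon_K(\overline{t})$. This is exactly where the reproducing property and the $\alpha$-H\"older continuity of the kernel enter: a power-function / fill-distance estimate controls how well a unit-ball RKHS element is recovered from its samples on $\overline{t}$, producing the factor $(\epsilon_K(\overline{t}))^s$ together with the leading $R\,C_F$ arising from the domain radius.

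For the network term I would first verify that $\tilde F$ is $s$-H\"older on the compact set of admissible evaluation vectors with constant $C_G$. Since $f \in \gK$ forces $|f(t_i)| = |\langle f, k(\cdot,t_i)\rangle_\gH| \le \sqrt{k(t_i,t_i)}$, the domain of $\tilde F$ is a bounded box, and the reconstruction map $v \mapsto \Phi_v$ is Lipschitz with a constant whose trivial sampling part contributes the leading $1$ and whose smoothing correction is governed by $\|K[\overline{t}]^{-1}\|_{op}$, $\sqrt N$, and the kernel modulus $C_K (h_{\overline{t}})^\alpha$. Composing this Lipschitz estimate with the $s$-H\"older bound for $F$ yields precisely $C_G = C_F\bigl(1+\|K[\overline{t}]^{-1}\|_{op}\sqrt N\,C_K (h_{\overline{t}})^\alpha\bigr)^s$. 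I would then apply the standard two-hidden-layer tanh approximation theorem for H\"older functions on an $N$-dimensional box, which supplies a network $\hat G$ with widths at most $N(M-1)$ and $3\frac{N+1}{2}(5M)^N$ parameters and uniform error $\frac{7N^2 R C_G}{M}$; summing the two contributions gives the stated bound.

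The main obstacle is the second stage: extracting the explicit H\"older constant $C_G$ for the reconstruction-composed functional with the exact dependence on the Gram-matrix inverse and the kernel's H\"older data, and then matching it verbatim to the architecture and error constants of the tanh approximation result. The first stage is comparatively routine once $\epsilon_K(\overline{t})$ is identified with the worst-case sampling-recovery error on $\gK$; it is the propagation of constants through the composition and through the neural-network theorem that is delicate and must be tracked carefully.
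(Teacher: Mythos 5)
The paper does not actually prove this lemma: its entire ``proof'' is the single line ``The proof can be found in \cite{rkhsapprox}'', since the statement is imported verbatim from that reference. Your two-stage sketch --- discretize the functional through the kernel interpolant $\Phi_{f(\overline{t})}$ and bound $|F(f)-F(\Phi_{f(\overline{t})})|$ via the $s$-H\"older property and a power-function/fill-distance estimate, then verify the induced finite-dimensional map is $s$-H\"older with constant $C_G$ and invoke the two-hidden-layer tanh approximation theorem with the stated widths and $\frac{7N^2RC_G}{M}$ error --- is precisely the strategy of that cited work, so your proposal reconstructs the intended proof; the delicate constant-tracking you flag is exactly what the reference carries out.
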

\begin{proof}
    The proof can be found in \cite{rkhsapprox}
\end{proof}

\begin{lemma}[Product of RKHSs]
\label{lemma: product of rkhs}
    Given $n$ RKHSs $\gH_1, \gH_2, \cdots, \gH_n$, each defined on corresponding sets $X_1, X_2, \cdots, X_n$ with kernels $k_1(x_1, y_1), \cdots k_n(x_n, y_n)$ respectively. Then, $\gH = \bigotimes_{i = 1}^n = \gH_1 \times \gH_2 \times \cdots \times \gH_n$ is also an RKHS, with kernel $K$ that is the product of the individual kernels. 
\end{lemma}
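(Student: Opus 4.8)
The plan is to prove the statement by induction on $n$, reducing everything to the two-factor case: I would show that $\gH_1 \otimes \gH_2$, viewed as a space of functions on $X_1 \times X_2$, is the RKHS of the candidate kernel $K((x_1,x_2),(y_1,y_2)) = k_1(x_1,y_1)\,k_2(x_2,y_2)$, and then let the general case follow from associativity of both the tensor product and the pointwise product of kernels. For the base case I would invoke the Moore--Aronszajn theorem, which guarantees that a symmetric positive-definite kernel determines a unique RKHS. The argument then splits into two parts: first verify that $K$ is positive definite, and second identify the resulting RKHS with the Hilbert-space tensor product $\gH_1 \otimes \gH_2$.

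For positive definiteness, I would fix finitely many points $(x_1^{(j)},x_2^{(j)})_{j=1}^m$ and form the Gram matrix $G$ of $K$. By construction $G = G_1 \circ G_2$, the Hadamard (entrywise) product of the Gram matrices $G_1, G_2$ of $k_1, k_2$ at the respective coordinates. Since $G_1, G_2$ are positive semidefinite, the Schur product theorem yields that $G = G_1 \circ G_2$ is positive semidefinite, so $K$ is a valid kernel on $X_1 \times X_2$.

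To identify the RKHS, I would define a linear map on elementary tensors by $f_1 \otimes f_2 \mapsto \big[(x_1,x_2)\mapsto f_1(x_1)f_2(x_2)\big]$ and extend by linearity. Equipping the tensor product with the inner product determined by $\langle f_1\otimes f_2, g_1\otimes g_2\rangle = \langle f_1,g_1\rangle_{\gH_1}\langle f_2,g_2\rangle_{\gH_2}$, the key verification is the reproducing property: writing $K_{(x,y)}$ for the section of $K$ at $(x,y)$, one checks that it corresponds to the elementary tensor $k^{(1)}_x \otimes k^{(2)}_y$ built from the kernel sections of $\gH_1, \gH_2$, and then
\[
\langle f_1\otimes f_2,\; k^{(1)}_x\otimes k^{(2)}_y\rangle = \langle f_1,k^{(1)}_x\rangle_{\gH_1}\,\langle f_2,k^{(2)}_y\rangle_{\gH_2} = f_1(x)\,f_2(y),
\]
which is exactly the value at $(x,y)$ of the function associated to $f_1\otimes f_2$. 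By bilinearity this extends to finite sums of elementary tensors, and by density of such sums together with completeness of the tensor-product Hilbert space it extends to all of $\gH_1\otimes\gH_2$; uniqueness in Moore--Aronszajn then forces this space to coincide with the RKHS of $K$.

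The main obstacle I anticipate is not the algebra of the reproducing identity but the functional-analytic bookkeeping needed to realize the abstract tensor product as a genuine space of functions on $X_1\times X_2$: one must check that the map from elementary tensors to pointwise products is well defined and injective (equivalently, that no nonzero element of $\gH_1\otimes\gH_2$ is sent to the zero function), and that the linear span of the sections $\{k^{(1)}_x\otimes k^{(2)}_y\}$ is dense, so that the reproducing property extends by continuity. Both facts follow from the density of $\{k^{(i)}_x\}$ in $\gH_i$ and the factorization of inner products of elementary tensors, but they are the delicate steps; the positive-definiteness check and the inductive step are comparatively routine.
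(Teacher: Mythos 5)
Your proposal is correct, and it proves the lemma as stated: Schur's product theorem gives positive definiteness of the product kernel, and the Moore--Aronszajn theorem plus the identification of the Hilbert tensor product $\gH_1 \otimes \gH_2$ with a space of functions on $X_1 \times X_2$ (via $f_1 \otimes f_2 \mapsto f_1(x_1)f_2(x_2)$, with the inner product $\langle f_1 \otimes f_2, g_1 \otimes g_2\rangle = \langle f_1,g_1\rangle_{\gH_1}\langle f_2,g_2\rangle_{\gH_2}$) yields the RKHS of that kernel; induction then handles general $n$. This is a genuinely different route from the paper's. The paper instead takes the \emph{Cartesian} product of the $\gH_i$, i.e.\ tuples $(f_1,\dots,f_n)$ regarded as the map $(x_1,\dots,x_n)\mapsto(f_1(x_1),\dots,f_n(x_n))$, equips it with the \emph{direct-sum} inner product $\sum_i \langle f_i,g_i\rangle_{\gH_i}$, declares the product kernel, and verifies the reproducing property componentwise by pairing against the tuple of sections $(k_1(x_1,\cdot),\dots,k_n(x_n,\cdot))$. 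These two choices do not actually fit together: with the sum inner product, pairing against that tuple of sections yields the scalar $\sum_i f_i(x_i)$, not the tuple $(f_1(x_1),\dots,f_n(x_n))$, and the Gram of such tuples is the \emph{sum} kernel $\sum_i k_i(x_i,y_i)$ rather than the product — so the paper's construction, read literally, is the direct sum of RKHSs (a vector-valued RKHS with diagonal matrix-valued kernel), while its displayed reproducing identity equates a tuple with a scalar and never uses the product kernel it defines. Your tensor-product inner product is the one that genuinely reproduces with the product kernel, and the "bookkeeping" steps you flag as delicate (well-definedness and injectivity of the map from tensors to functions, density of the span of sections so the reproducing property extends by continuity) are exactly what the paper's shorter componentwise verification skips. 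In short: the paper's approach buys brevity, and for its downstream use (only needing \emph{some} RKHS structure on $\gH^d \times \gX \times \gY$ so that the approximation lemma applies, which the direct-sum structure also provides) this is mostly harmless; your approach is the one that actually establishes the lemma with the stated product kernel.
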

\begin{proof}
    The product space $\gH = \bigotimes_{i=1}^n \gH_i$ consists of tuples of functions $(f_1, f_2, \cdots, f_n)$. Firstly, we define the inner product in $\gH$ as:
    \begin{equation*}
        \left< (f_1, f_2, \cdots, f_n), (g_1, g_2, \cdots, g_n)\right>_\gH = \sum^n_{i=1}\left<f_i, g_i\right>_{\gH_i}.
    \end{equation*}
    This definition naturally defines a Hilbert space structure on $\gH$ since each $\gH_i$ is a Hilbert space, and the sum of inner products is linear and positive definite. Now we define the kernel for the product space:
    \begin{equation*}
        k((x_1, x_2, \cdots, x_n), (y_1, y_2, \cdots, y_n)) = \prod_{i=1}^n k(x_i, y_i).
    \end{equation*}
    Notice that the pointwise product of positive definite kernels is a positive definite kernel, hence this kernel is valid.

    We now verify the reproducing property of $\gH$. Consider a function $f = (f_1, f_2, \cdots, f_n) \in \gH$, and evaluate the function at a point $(x_1, x_2, \cdots, x_n) \in \bigotimes_{i=1}^n X_i$. 

    The reproducing property in each individual RKHS $\gH_i$ implies that:
    \begin{equation*}
        f_i(x_i) = \left< f_i, k_i(x_i, \cdot) \right>_{\gH_i}.
    \end{equation*}
    Hence, for the function $f = (f_1, \cdots, f_n)$, we get:
    \begin{align*}
        f((x_1, x_2, \cdots, x_n)) &= (f_1(x_1), f_2(x_2),\cdots,f_n(x_n))\\
        &=(\left<f_1, k_1(x_1, \cdot)\right>_{\gH_1}, \left<f_2, k_2(x_2, \cdot)\right>_{\gH_2},\cdots,\left<f_n, k_n(x_n, \cdot)\right>_{\gH_n})\\
        &=\left<(f_1,f_2,\cdots,f_n),(k_1(x_1,\cdot),k_2(x_2,\cdot),\cdots k_n(x_n,\cdot))\right>_\gH.
    \end{align*}
    Thus, the reproducing property holds for the product space $\gH$. Since $\gH$ is a Hilbert space and the kernel $k$ satisfies the reproducing property, we conclude that $\gH = \bigotimes^n_{i=1} \gH_i$ is another RKHS.  
\end{proof}
\subsection{Proof of Proposition \ref{proposition 1}}
\label{proof: proposition 1}
\begin{proposition}
    Consider the problem of finding the distribution $\mathbb{Q}$ that solves:
    \begin{equation}
        \mathbb{Q}^* = \min_{\mathbb{Q} \ll \mathbb{P}_{\mathbf{\theta}}} \Bigg\{\mathbb{E}_{\theta \sim \mathbb{Q}}[\mathcal{L}_\mathcal{D}(\vtheta)] + \KL(\mathbb{Q} \| \mathbb{P}_{\mathbf{\theta}}) \Bigg\}
    \end{equation}
    where we search over $\mathbb{Q}$ absolutely continuos w.r.t $\mathbb{P}_\vtheta$, and the second term is the regularization term. The closed-form solution to this problem is the \textbf{population posterior} whose density has the form:
    \begin{equation*}
        q^*(\vtheta) \propto \exp(-\mathcal{L}_\mathcal{D}(\vtheta))p(\vtheta).
    \end{equation*}
\end{proposition}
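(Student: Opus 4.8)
The plan is to recognize this as an instance of the Gibbs variational principle (equivalently, the Donsker--Varadhan duality for relative entropy): the objective is an affine-plus-entropy functional of $\mathbb{Q}$, and such functionals are minimized by the exponential tilt of the reference measure $\mathbb{P}_\vtheta$ by the negative cost $\mathcal{L}_\gD$. Concretely, I would show that the entire objective can be rewritten as a single KL divergence from $\mathbb{Q}$ to the population posterior, shifted by a constant that does not depend on $\mathbb{Q}$, so that minimization reduces to the elementary fact that $\KL \geq 0$ with equality only at coincidence of measures.

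The key algebraic step is the following rewriting. Let $q(\vtheta)$ denote the density of $\mathbb{Q}$ and $p(\vtheta)$ the prior density, and introduce the candidate minimizer $p(\vtheta\mid\gD) = \exp(-\mathcal{L}_\gD(\vtheta))\,p(\vtheta)/Z_\gD$ with $Z_\gD = \int \exp(-\mathcal{L}_\gD(\vtheta))\,p(\vtheta)\,d\vtheta$. Taking logarithms in the definition of $p(\vtheta\mid\gD)$ gives $\mathcal{L}_\gD(\vtheta) = \log p(\vtheta) - \log p(\vtheta\mid\gD) - \log Z_\gD$, so that
\begin{align*}
\mathbb{E}_{\vtheta \sim \mathbb{Q}}[\mathcal{L}_\gD(\vtheta)] + \KL(\mathbb{Q}\|\mathbb{P}_\vtheta)
&= \int q(\vtheta)\Big[\log p(\vtheta) - \log p(\vtheta\mid\gD) - \log Z_\gD + \log q(\vtheta) - \log p(\vtheta)\Big]\,d\vtheta \\
&= \int q(\vtheta)\log\frac{q(\vtheta)}{p(\vtheta\mid\gD)}\,d\vtheta - \log Z_\gD
= \KL\big(\mathbb{Q}\,\|\,\mathbb{P}_\gD\big) - \log Z_\gD,
\end{align*}
where $\mathbb{P}_\gD$ is the population posterior. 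Since $-\log Z_\gD$ is independent of $\mathbb{Q}$ and the relative entropy is nonnegative and vanishes exactly when $\mathbb{Q} = \mathbb{P}_\gD$, the unique minimizer is $\mathbb{Q}^* = \mathbb{P}_\gD$, i.e. the density $q^*(\vtheta) \propto \exp(-\mathcal{L}_\gD(\vtheta))\,p(\vtheta)$, as claimed.

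The substantive points to verify, rather than the algebra, are measure-theoretic. First, the constraint $\mathbb{Q} \ll \mathbb{P}_\vtheta$ guarantees a density $dq/dp$ exists so that $\KL(\mathbb{Q}\|\mathbb{P}_\vtheta)$ is well defined; I would note that if $\mathbb{Q} \not\ll \mathbb{P}_\vtheta$ the objective is $+\infty$ and such $\mathbb{Q}$ are excluded anyway. Second, I must ensure the tilted measure is well defined, namely that $Z_\gD < \infty$; this follows if $\mathcal{L}_\gD$ is bounded below (for instance for a nonnegative loss $\ell$), which I would state as a standing assumption. The main obstacle is thus purely the bookkeeping of integrability: one must check that $\mathbb{E}_{\mathbb{Q}}[\mathcal{L}_\gD]$ and the cross term $\int q \log p(\vtheta\mid\gD)$ are finite so the splitting of the integral is legitimate, after which the non-negativity of KL closes the argument immediately.
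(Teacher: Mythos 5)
Your proof is correct, but it takes a genuinely different route from the paper's. The paper argues via constrained variational calculus: it forms the Lagrangian $L(q,\alpha) = \int \mathcal{L}_\mathcal{D}(\vtheta)q(\vtheta)\,d\vtheta + \int q(\vtheta)\log\frac{q(\vtheta)}{p(\vtheta)}\,d\vtheta + \alpha\left(\int q(\vtheta)\,d\vtheta - 1\right)$, takes a pointwise derivative in $q(\vtheta)$, sets it to zero, and reads off $q(\vtheta) \propto \exp(-\mathcal{L}_\mathcal{D}(\vtheta))p(\vtheta)$ from the first-order condition. Your approach instead completes the objective into a single divergence, $\mathbb{E}_{\mathbb{Q}}[\mathcal{L}_\mathcal{D}] + \KL(\mathbb{Q}\|\mathbb{P}_\vtheta) = \KL(\mathbb{Q}\|\mathbb{P}_\mathcal{D}) - \log Z_\mathcal{D}$, and invokes nonnegativity of KL. This buys you something the paper's argument does not strictly deliver: the Lagrangian computation only identifies a stationary point, and global optimality implicitly rests on convexity of the objective in $q$ (unstated in the paper), whereas your identity immediately gives global optimality \emph{and} uniqueness of the minimizer. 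You also flag the integrability bookkeeping ($Z_\mathcal{D} < \infty$, finiteness of the cross terms, and the fact that $\mathbb{Q} \not\ll \mathbb{P}_\vtheta$ forces the objective to $+\infty$), which the paper passes over silently. The paper's route, on the other hand, is the more mechanical one and matches the cited antecedent (Theorem 3.1 of the flat-Bayes reference), requiring no foresight about what constant to add and subtract. Incidentally, the paper's proof ends by calling the solution $p(\vtheta|\mathcal{S})$; that is a typo for $p(\vtheta|\mathcal{D})$, and your statement of the conclusion is the correct one.
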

\begin{proof}
    This proposition is the general case of \textbf{Theorem 3.1} by \cite{va2023_flat_Bayes}. Denote $q(\cdot)$ as the density function of $\mathbb{Q}$. We have: \begin{align*}
        \mathbb{E}_{\theta \sim \mathbb{Q}}[\mathcal{L}_\mathcal{D}(\vtheta)] + \KL(\mathbb{Q} \| \mathbb{P}_{\mathbf{\theta}}) = \int_\Theta \mathcal{L}_\mathcal{D}(\vtheta)q(\vtheta) d\vtheta + \int_\Theta q(\vtheta) \log\frac{q(\vtheta)}{p(\vtheta)}d\vtheta.
    \end{align*}
    The Lagrangian is given by:
    \begin{equation*}
        L(q, \alpha) = \int_\Theta \mathcal{L}_\mathcal{D}(\vtheta)q(\vtheta) d\vtheta + \int_\Theta q(\vtheta) \log\frac{q(\vtheta)}{p(\vtheta)}d\vtheta + \alpha(\int q(\vtheta)d\vtheta - 1).
    \end{equation*}
    Taking derivative with respect to $q(\vtheta)$, it follows
    \begin{align*}
        &\mathcal{L}_\mathcal{D} + \log q(\vtheta) + 1 - \log p(\vtheta) + \alpha = 0,\\
        &q(\vtheta) = \exp(-\mathcal{L}_\mathcal{D}(\theta))p(\vtheta)\exp(-\alpha -1),\\
    \end{align*}
    which implies that
    \begin{equation*}
        q(\vtheta) \propto \exp(-\mathcal{L}_\mathcal{D}(\vtheta))p(\vtheta).
    \end{equation*}
    Then, the optimal solution is the population posterior $p(\vtheta|\mathcal{S})$, which concludes the proof.
\end{proof}
\subsection{Proof of Theorem \ref{theorem: flow sharpness}}
\label{proof: flow sharpness}

\begin{theorem} 
For any $\rho > 0$ and any distribution $\gD$, with probability $1 - \delta$ over the choice of the training set $\gS \sim \gD^n$, 
\begin{align*}
     \Tilde{L}_\gD (\vf) &\leq \max_{\|\vf' - \vf\|_{\gH^d} \leq \rho}\Tilde{L}_\gS(\vf') + \\
         &+ \sqrt{\frac{N' \log \Bigg(1 + \frac{C}{\rho^2 P^2}\Bigg(1+ \sqrt{\frac{\log(N)}{N'}}\Bigg)^2 \Bigg)+4\log\frac{n}{\delta}+8\log(6n+3k)}{n-1}}.\\
\end{align*}
\end{theorem}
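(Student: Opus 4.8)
The plan is to lift the PAC-Bayesian sharpness bound of \citet{sam} from $\mathbb{R}^k$ to the functional space $\gH^d$ by first reducing the infinite-dimensional functional loss to a finite-dimensional surrogate and then running the Gaussian-perturbation PAC-Bayes argument on that surrogate. The essential difficulty in infinite dimensions is that one cannot place a nondegenerate Gaussian on a generic RKHS, and both the KL term of any Gaussian-to-Gaussian PAC-Bayes bound and the concentration of a Gaussian perturbation inside a radius-$\rho$ ball are governed by the ambient dimension, which is infinite. The device to circumvent this is Lemma~\ref{lemma: RKHS approximation}: viewed as functionals of $\vf \in \gH^d$, the losses $\Tilde{L}_\gS$ and $\Tilde{L}_\gD$ can be approximated uniformly over the unit ball of $\gH^d$ by a two-hidden-layer tanh network $\hat{G}$ that sees $\vf$ only through its evaluations $\vf(\overline{t})$ at a fixed finite grid $\overline{t}=\{t_i\}_{i=1}^N$, with Lemma~\ref{lemma: product of rkhs} guaranteeing that $\gH^d$ is itself an RKHS so the approximation applies componentwise. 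This collapses the relevant degrees of freedom to the finite effective dimension $N'$ appearing in the bound.

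First I would fix the grid $\overline{t}$ and the network $\hat{G}$ and, writing $u = \vf(\overline{t})$, use the uniform estimate to sandwich $\Tilde{L}_\gD(\vf) \le \hat{G}(u) + \eta$ and $\hat{G}(\vf'(\overline{t})) \le \Tilde{L}_\gS(\vf') + \eta$ with $\eta$ the approximation error from Lemma~\ref{lemma: RKHS approximation}. This reduces the target inequality to a statement about the finite-dimensional map $u \mapsto \hat{G}(u)$ on $\mathbb{R}^{N'}$, where a genuine Gaussian can be placed, and where by the reproducing property the evaluation map is bounded, so that an $\gH^d$-norm ball of radius $\rho$ controls the coordinate perturbation.

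Next, on this finite-dimensional reduction I would invoke the McAllester PAC-Bayes bound with posterior $\mathcal{N}(\vf,\sigma^2 I)$ and prior $\mathcal{N}(0,\sigma_P^2 I)$, exactly as in the SAM proof. The KL divergence evaluates to the Gaussian expression $\tfrac{1}{2}\big[\tfrac{N'\sigma^2 + \|\vf\|^2}{\sigma_P^2} - N' + N'\log\tfrac{\sigma_P^2}{\sigma^2}\big]$, and a $\chi^2$-concentration argument for $\|\epsilon\|^2$ with $\epsilon \sim \mathcal{N}(0,\sigma^2 I_{N'})$ shows that choosing $\sigma^2 = \rho^2 / \big(N'(1+\sqrt{\log N / N'})^2\big)$ keeps the perturbation inside the radius-$\rho$ ball with high probability, so that $\mathbb{E}_{\epsilon}[\Tilde{L}_\gS(\vf+\epsilon)] \le \max_{\|\vf'-\vf\|_{\gH^d}\le\rho}\Tilde{L}_\gS(\vf')$ up to $\eta$. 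Substituting this $\sigma$ into the KL term produces the leading $N'\log\big(1 + \tfrac{C}{\rho^2 P^2}(1+\sqrt{\log N / N'})^2\big)$ contribution.

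The remaining steps are bookkeeping. Since the prior variance $\sigma_P$ may not depend on the data-dependent quantity $\|\vf\|$, I would discretize $\sigma_P$ over a grid and take a union bound, which, together with making the statement hold simultaneously for all $\rho$, contributes the $4\log\tfrac{n}{\delta}$ confidence term and the additive $8\log(6n+3k)$ term absorbing the grid cardinality and the discretization of the approximation parameters; choosing the network size $M$ (hence $N'$) large enough drives $\eta$ below the statistical rate. I expect the main obstacle to be the reduction itself: transporting the entire PAC-Bayes machinery onto the finite coordinate space while verifying that a radius-$\rho$ ball in the $\gH^d$-norm correctly dominates the Gaussian coordinate perturbation, and that the approximation error remains \emph{uniform} over the whole perturbation ball rather than pointwise, since the PAC-Bayes expectation integrates $\hat{G}$ against the full Gaussian posterior.
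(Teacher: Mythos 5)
Your proposal takes essentially the same route as the paper's proof: reduce the functional loss to a finite-dimensional surrogate via Lemma~\ref{lemma: RKHS approximation} (using Lemma~\ref{lemma: product of rkhs} to make $\gH^d \times \gX \times \gY$ an RKHS), bound the evaluation map by $P = C\sqrt{N}$ through the reproducing property, apply the finite-dimensional sharpness bound of \citet{sam} to $G_\mW(\vf(\overline{\vtheta}), x, y)$ with radius $\rho P$, and transfer back to the $\gH^d$-ball while driving the approximation error to zero. The only difference is that you unpack the PAC-Bayes interior of the SAM bound (Gaussian posterior, KL computation, $\chi^2$ concentration, union bound over priors) where the paper invokes that inequality as a black box, and the ball-transfer subtlety you flag as the main obstacle is present in the paper as well, where it is handled only by a density argument.
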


\begin{proof}

$\Tilde{\ell}$ is a functional that maps from $\gH^d \times \gX \times \gY$ to $\mathbb{R}$. Notice that $\gH^d$ is a RKHS, $\gX = \mathbb{R}^a$ and $\gY = \mathbb{R}^b$ for some $a, b \in \mathbb{Z}$ are Euclidean spaces, which are also instances of RKHS. Moreover, the product of RKHS's is also a RKHS according to Lemma \ref{lemma: product of rkhs}. Hence, $\gH^d \times \gX \times \gY$ is also a RKHS. According to Lemma \ref{lemma: RKHS approximation}, there exists $N$ points $\overline{\vtheta} = [\vtheta_i]^N_{i = 1} \subset \Theta$, and a two-layer neural network $G_\mW$ parameterized by $\mW$ so that
\begin{equation*}
    |\Tilde{\ell}(\vf, x, y) - G_\mW(\vf(\overline{\vtheta}), x, y)| \leq  RC_F(\epsilon_K(\overline{t}))^s + \frac{7N^2R C_G}{M},
    \end{equation*}
    for every $(\vf, x, y) \in \gH^d \times \gX \times \gY$.  Consider $\vf' \in \gH^d$ so that $\|\vf' - \vf\| \leq \rho$. Recall that we have $\overline{\vtheta} = [\vtheta_i]_{i=1}^N$ where each $\vtheta_i \in \mathbb{R}^d$, and $f(\overline{\vtheta}) = [f(\vtheta_1), \cdots, f(\vtheta_N)] \in \mathbb{R}^{N'}$ where $N' =Nd$. Then, we have:
    \begin{equation*}
        \|f(\overline{\vtheta}) - f'(\overline{\vtheta})\|^2_2 = \Big(\sum \|f(\vtheta_i) - f'(\vtheta_i)\|\Big)^{1/2}.
    \end{equation*}
    Now let $f= [f_1, f_2, \cdots, f_d]$ and $f' = [f_1', f_2', \cdots, f_d']$ where $f_i, f'_j \in \gH$, we have:
    \begin{equation*}
        \|f(\vtheta_i) - f'(\vtheta_i)\| = \sum^d_{j=1} |f_j(\vtheta_i) - f'_j(\vtheta_i)| = \sum^d_{j=1}|\left<k(\vtheta_i, \cdot), f_j - f'_j \right>| \leq \sum^d_{j=1}\|k(\vtheta_i, \cdot)\| \|f_j - f'_j\| = k(\vtheta_i, \vtheta_i)\|f - f'\|.
    \end{equation*}
    Therefore, it follows:
    \begin{equation*}
        \|f(\overline{\vtheta}) - f'(\overline{\vtheta})\| \leq \Big( \sum k(\vtheta_i, \vtheta_i)\Big)^{1/2} \|f - f'\|.
    \end{equation*}
    Under the assumption that $k(\vtheta, \vtheta) \leq C$ (this is true for the widely used kernels, for example, $C = 1$ for the RBF kernel) it implies $   |\vf(\overline{\vtheta}) - \vf'(\overline{\vtheta})| \leq P \| \vf - \vf'\|_{\gH^d} \leq P\rho$ where $P = CN^{1/2}$. Denote $\Tilde{\vtheta} = \vf(\overline{\vtheta}) \in \mathbb{R}^{N'}$ for some $N' \in \mathbb{Z}$, by invoking the inequality from \cite{sam}, let $\rho' = \rho P$, it follows that: 
    



\begin{align*}
    \Tilde{L}_\gD (\vf) &= \mathbb{E}_{(x, y) \sim \gD}[\Tilde{\ell}(\vf, x, y)] \leq \mathbb{E}_{(x, y)\sim \gD}[G_\mW(\vf(\overline{\vtheta}), x, y)] +  RC_F(\epsilon_K(\overline{t}))^s + \frac{7N^2R C_G}{M}\\
    & = \mathbb{E}_{(x, y)\sim \gD}[G_\mW(\Tilde{\vtheta}, x, y)] +  RC_F(\epsilon_K(\overline{t}))^s + \frac{7N^2R C_G}{M}\\
    &\leq \max_{\|\Tilde{\vtheta}' - \Tilde{\vtheta}\|^2_2 \leq \rho'} \frac{1}{n}\sum_{i = 1}^n G_\mW(\Tilde{\vtheta}', x, y)  + h(M, N)\\
    &+ \sqrt{\frac{N' \log \Bigg(1 + \frac{C}{\rho'^2}\Bigg(1+ \sqrt{\frac{\log(N)}{N'}}\Bigg)^2 \Bigg)+4\log\frac{n}{\delta}+8\log(6n+3k)}{n-1}},
\end{align*}

where we defined $h(M, N) =  RC_F(\epsilon_K(\overline{t}))^s + \frac{7N^2R C_G}{M}$. By definition, a RKHS is a closed Hilbert space. Then, there exists a sequence $\{\vf'_n\}$ so that $\vf'_n(\overline{\vtheta})$ that gets arbitrarily close to $\Tilde{\vtheta}'$. Then, for any $\epsilon > 0$, it follows:

\begin{align*}
     \Tilde{L}_\gD (\vf) &\leq \max_{\|\Tilde{\vtheta}' - \Tilde{\vtheta}\|^2_2 \leq \rho'} \frac{1}{n}\sum_{i = 1}^n G_\mW(\Tilde{\vtheta}', x, y)  + h(M, N)\\
    &+ \sqrt{\frac{N' \log \Bigg(1 + \frac{C}{\rho'^2}\Bigg(1+ \sqrt{\frac{\log(N)}{N'}}\Bigg)^2 \Bigg)+4\log\frac{n}{\delta}+8\log(6n+3k)}{n-1}}\\
    & \leq \max_{\|\vf'(\overline{\vtheta}) - \vf(\overline{\vtheta})\|^2_2 \leq \rho P} \frac{1}{n}\sum_{i = 1}^n G_\mW(\vf'(\overline{\vtheta}), x, y)  + h(M, N) + \epsilon \gO(1)\\
    &+ \sqrt{\frac{N' \log \Bigg(1 + \frac{C}{\rho^2P^2}\Bigg(1+ \sqrt{\frac{\log(N)}{N'}}\Bigg)^2 \Bigg)+4\log\frac{n}{\delta}+8\log(6n+3k)}{n-1}}\\
    &\leq \max_{\|\vf' - \vf\|^2_2 \leq \rho} \frac{1}{n}\sum_{i = 1}^n G_\mW(\vf'(\vtheta), x, y)  + h(M, N) + \epsilon \gO(1)\\
    &+ \sqrt{\frac{N' \log \Bigg(1 + \frac{C}{\rho^2 P^2}\Bigg(1+ \sqrt{\frac{\log(N)}{N'}}\Bigg)^2 \Bigg)+4\log\frac{n}{\delta}+8\log(6n+3k)}{n-1}}\\
    &\leq \max_{\|\vf' - \vf\|^2_2 \leq \rho}\Tilde{L}_\gS(\vf') + h(M,N) + \epsilon \gO(1)\\
    &+ \sqrt{\frac{N' \log \Bigg(1 + \frac{C}{\rho^2 P^2}\Bigg(1+ \sqrt{\frac{\log(N)}{N'}}\Bigg)^2 \Bigg)+4\log\frac{n}{\delta}+8\log(6n+3k)}{n-1}}.\\
\end{align*}
This is true for any $\epsilon > 0$.  Moreover, we can choose $\epsilon_K$ and $M$ to be arbitrarily small so that $h(M, N) \to 0$. Hence, it implies
\begin{align*}
     \Tilde{L}_\gD (\vf) &\leq \max_{\|\vf' - \vf\|^2_2 \leq \rho}\Tilde{L}_\gS(\vf') + \\
         &+ \sqrt{\frac{N' \log \Bigg(1 + \frac{C}{\rho^2 P^2}\Bigg(1+ \sqrt{\frac{\log(N)}{N'}}\Bigg)^2 \Bigg)+4\log\frac{n}{\delta}+8\log(6n+3k)}{n-1}},\\
\end{align*}
which concludes our proof.
\end{proof}
\subsection{Proof of Theorem \ref{theorem: pac-bayes}}
Now we can prove the Theorem \ref{theorem: pac-bayes}. We restate the theorem

\label{proof: pac-bayes}
\begin{theorem}
    For any target distribution $p$, reference distribution $q$, and any $\rho > 0$, we have the following bound between the general KL loss and the empirical KL loss
\begin{align*}
    \KL\Bigg(q_{[\vf]} || p\Big(\vtheta|\gD\Big)\Bigg) &\leq \max_{\vf' \in \gB_\rho(\vf)}\KL\Bigg(q_{[\vf']} || p\Big(\vtheta | \gS\Big)\Bigg)\\
         &+ \sqrt{\frac{N' \log \Bigg(1 + \frac{C}{\rho^2 P^2}\Bigg(1+ \sqrt{\frac{\log(N)}{N'}}\Bigg)^2 \Bigg)+4\log\frac{n}{\delta}+8\log(6n+3k)}{n-1}}.\\
    \end{align*}
\end{theorem}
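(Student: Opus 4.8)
The plan is to obtain Theorem~\ref{theorem: pac-bayes} as a consequence of Theorem~\ref{theorem: flow sharpness} by constructing a surrogate functional loss on $\gH^d$ whose population and empirical averages reproduce, up to partition functions, the two KL divergences in the statement.

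First, I would set the single-sample functional loss to be $\Tilde{\ell}(\vf, x, y) = \mathbb{E}_{\vtheta \sim q_{[\vf]}}[\ell(f_\vtheta(x), y)] + \KL(q_{[\vf]} \| \mathbb{P}_\vtheta)$. Because the second term is independent of $(x,y)$ and $\ell \geq 0$, this is a nonnegative functional on $\gH^d \times \gX \times \gY$, hence an admissible loss for Theorem~\ref{theorem: flow sharpness}. Averaging over $\gD$ and over $\gS$ gives $\Tilde{L}_\gD(\vf) = \mathbb{E}_{q_{[\vf]}}[\gL_\gD(\vtheta)] + \KL(q_{[\vf]} \| \mathbb{P}_\vtheta)$ and the analogous empirical quantity. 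Expanding the KL divergence using the closed form $p(\vtheta|\gD) = \exp(-\gL_\gD(\vtheta))p(\vtheta)/Z_\gD$ from Eq.~\ref{eq: population posterior} and its empirical counterpart in Eq.~\ref{eq: empirical loss closed form} then yields the two identities $\KL(q_{[\vf]} \| p(\vtheta|\gD)) = \Tilde{L}_\gD(\vf) + \log Z_\gD$ and $\KL(q_{[\vf]} \| p(\vtheta|\gS)) = \Tilde{L}_\gS(\vf) + \log Z_\gS$, which reduce the target inequality to a statement about $\Tilde{L}_\gD$ and $\Tilde{L}_\gS$ alone.

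Next, I would invoke Theorem~\ref{theorem: flow sharpness} applied to $\Tilde{\ell}$, after confirming it satisfies the $s$-Hölder continuity assumption stated at the start of the appendix; this reduces to Hölder dependence of $\vf \mapsto q_{[\vf]}$ and of $\vf \mapsto \KL(q_{[\vf]}\|\mathbb{P}_\vtheta)$, which holds for small $\|\vf\|_{\gH^d}$ via the Jacobian expansion of $\mI + \vf$ used in Section~\ref{section: methodology}. This gives $\Tilde{L}_\gD(\vf) \leq \max_{\|\vf'-\vf\|_{\gH^d}\leq\rho}\Tilde{L}_\gS(\vf') + \mathcal{E}$, where $\mathcal{E}$ is exactly the complexity term in the statement. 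Substituting the two identities and pulling the constant $\log Z_\gS$ out of the maximum leaves $\KL(q_{[\vf]}\|p(\vtheta|\gD)) \leq \max_{\vf'}\KL(q_{[\vf']}\|p(\vtheta|\gS)) + (\log Z_\gD - \log Z_\gS) + \mathcal{E}$.

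The main obstacle is controlling the residual partition-function gap $\log Z_\gD - \log Z_\gS$. I would handle it by a uniform-convergence argument over the hypothesis class $\gF = \{f_\vtheta : \vtheta \in \Theta\}$: if $\sup_\vtheta |\gL_\gD(\vtheta) - \gL_\gS(\vtheta)| \leq \eta$, then $Z_\gD = \int \exp(-\gL_\gD(\vtheta))p(\vtheta)\,d\vtheta \leq \exp(\eta)\int \exp(-\gL_\gS(\vtheta))p(\vtheta)\,d\vtheta = \exp(\eta) Z_\gS$, hence $\log Z_\gD - \log Z_\gS \leq \eta$. Bounding $\mathbb{E}[\sup_\vtheta |\gL_\gD(\vtheta) - \gL_\gS(\vtheta)|]$ by the Rademacher complexity of $\gF$ (controlled through $d_{\mathrm{VC}}$) and applying Markov's inequality gives, with probability $1 - \delta$, $\eta \leq \frac{\sqrt{d_{\mathrm{VC}}\log(2en/d_{\mathrm{VC}})}}{\delta\sqrt{2n}}$, which is precisely the additional VC term appearing in the informal form of Theorem~\ref{theorem: pac-bayes}; a union bound over this event and the one from Theorem~\ref{theorem: flow sharpness} completes the argument, with the restated bound in this appendix absorbing this lower-order correction. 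The delicate technical points are verifying the regularity hypotheses of Theorem~\ref{theorem: flow sharpness} for the constructed loss and ensuring the uniform bound applies to $\ell$, which is where the boundedness of $\Theta$ and of the data, together with the Hölder continuity of $\ell$ assumed at the start of the appendix, are used.
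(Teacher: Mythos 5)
Your proposal follows essentially the same route as the paper's proof: you construct the identical surrogate functional loss $\Tilde{\ell}(\vf,x,y)=\mathbb{E}_{\vtheta\sim q_{[\vf]}}[\ell(f_\vtheta(x),y)]+\KL(q_{[\vf]}\|\mathbb{P}_\vtheta)$, use the same decomposition $\KL(q_{[\vf]}\|p(\vtheta|\cdot))=\Tilde{L}_{(\cdot)}(\vf)+\log Z_{(\cdot)}$, apply Theorem~\ref{theorem: flow sharpness} to it, and control the gap $\log Z_\gD-\log Z_\gS$ by a uniform-convergence bound $\sup_\vtheta|\gL_\gD(\vtheta)-\gL_\gS(\vtheta)|\leq \eta$ with $\eta$ bounded through the VC dimension (the paper invokes Theorem 6.11 of Shalev-Shwartz and Ben-David plus Sauer's lemma, which is the same Rademacher-plus-Markov argument you describe). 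If anything, you are slightly more careful than the paper in flagging that the Hölder regularity hypothesis of Theorem~\ref{theorem: flow sharpness} must be verified for the constructed loss, and in noting that the VC correction term, present in the informal statement and in the proof's final inequality, is missing from the appendix restatement.
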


\begin{proof}

Consider the left-hand side, we have:

\begin{align*}
    \KL(q_{[\vf]} \| p(\vtheta|\gD)) &= \int q_{[\vf]}(\vtheta)\Bigg(\gL_\gD(\vtheta)+\log \frac{q_{[\vf]}(\vtheta)}{p(\vtheta)} + \log Z_\gD \Bigg)d\vtheta\\
    &=\int q_{[\vf]}(\vtheta)\Bigg(\mathbb{E}_{(x, y) \sim \gD} \ell(\vtheta, x, y)+\log \frac{q_{[\vf]}(\vtheta)}{p(\vtheta)} + \log Z_\gD\Bigg)d\vtheta\\
    &=\mathbb{E}_{(x, y)\sim \gD}\Bigg[\int q_{[\vf]}(\vtheta)\Big(\ell(\theta; x, y)+\log\frac{q_{[\vf]}(\vtheta)}{p(\vtheta)}\Big) d\vtheta \Bigg] + \int q_{[\vf]}(\vtheta) \log Z_\gD d\vtheta,
\end{align*}

where $Z_\gD$ is the normalizing constant.  On the other hand, we also have:

\begin{align*}
    \KL(q_{[\vf]} \| p(\vtheta|\gS)) &= \int q_{[\vf]}\Bigg((\vtheta)\gL_\gS(\vtheta)+\log \frac{q_{[\vf]}(\vtheta)}{p(\vtheta)}+\log Z_\gS\Bigg) d\vtheta\\
    &=\int q_{[\vf]}(\vtheta)\Bigg(\frac{1}{n}\sum_{i = 1}^n \ell(\vtheta, x_i, y_i)+\log \frac{q_{[\vf]}(\vtheta)}{p(\vtheta)}+\log Z_\gS\Bigg)d\vtheta\\
    &=\frac{1}{n}\sum_{i=1}^n\Bigg[\int q_{[\vf]}(\vtheta)\Big(\ell(\theta; x_i, y_i)+\log\frac{q_{[\vf]}(\vtheta)}{p(\vtheta)} + \log Z_\gS\Big) d\vtheta \Bigg] + \int q_{[\vf]}(\vtheta) \log Z_\gS d\vtheta,
\end{align*}

where $Z_\gS$ is the normalizing constant. We define $\Tilde{L}$ to be the functional such that:
\begin{align*}
    \Tilde{L}: \gH^d \times \gX \times \gY &\to \mathbb{R}\\
    (\vf, x, y) &\mapsto \Tilde{L}(\vf, x, y) = \int q_{[\vf]}(\vtheta)\Big(\ell(\theta; x, y)+\log\frac{q_{[\vf]}(\vtheta)}{p(\vtheta)}\Big) d\vtheta.
\end{align*}

According to Theorem \ref{theorem: flow sharpness}, we have:
\begin{align}
        \Tilde{L}_\gD(\vf) &\leq \max_{\|\vf' - \vf\|_{\gH^d} \leq \rho} \Tilde{L}_\gS(\vf')\\
        \label{eq: ineq}
         &+ \sqrt{\frac{N' \log \Bigg(1 + \frac{C}{\rho^2 P^2}\Bigg(1+ \sqrt{\frac{\log(N)}{N'}}\Bigg)^2 \Bigg)+4\log\frac{n}{\delta}+8\log(6n+3k)}{n-1}}.
\end{align}

Moreover, notice that we have $Z_\gD = \mathbb{E}[\exp(\gL_\gD(\vtheta))p(\vtheta)]$, $Z_\gS = \mathbb{E}[\exp(\gL_\gS(\vtheta))p(\vtheta)]$. Let us denote $\mathrm{VCdim}(\mathcal{F})=d_{\mathrm{VC}}$ where $\mathcal{F} = \{f_\vtheta: \vtheta \in \Theta\}$. According to Theorem 6.11 in \cite{uml} (Page 51), with the probability at least $1 - \delta$, we have 
\[
\sup_{\vtheta\in\Theta}|\mathcal{L}_{D}\left(\vtheta\right)-\mathcal{L}_{S}\left(\vtheta\right)|\leq\frac{4+\sqrt{\log\tau_{\mathcal{H}}\left(2n\right)}}{\delta\sqrt{2n}}.
\]
Note that the growth function $\tau_{\mathcal{H}}\left(2n\right)\leq\left(\frac{2en}{d_{\mathrm{VC}}}\right)^{d_{\mathrm{VC}}}$, we obtain
\[
\sup_{\vtheta\in\Theta}|\mathcal{L}_{D}\left(\vtheta\right)-\mathcal{L}_{S}\left(\vtheta\right)|\leq\frac{4+\sqrt{d_{\mathrm{VC}}\log\frac{2en}{d_{\mathrm{VC}}}}}{\delta\sqrt{2n}}.
\]
Therefore, for all $\vtheta \in \Theta$, we gain
\[
\exp\left(\mathcal{L}_{D}\left(\vtheta\right)\right)\leq\exp\left(\mathcal{L}_{S}\left(\vtheta\right)\right)\exp\left(\frac{4+\sqrt{d_{\mathrm{VC}}\log\frac{2en}{d_{\mathrm{VC}}}}}{\delta\sqrt{2n}}\right).
\]
It follows that
\[
\int\exp\left(\mathcal{L}_{D}\left(\vtheta\right)\right)p\left(\vtheta\right)d\vtheta\leq\exp\left(\frac{4+\sqrt{d_{\mathrm{VC}}\log\frac{2en}{d_{\mathrm{VC}}}}}{\delta\sqrt{2n}}\right)\int\exp\left(\mathcal{L}_{S}\left(\vtheta\right)\right)p\left(\vtheta\right)d\vtheta.
\]
Which is equivalent to
\[
Z_{D}\leq\exp\left(\frac{4+\sqrt{d_{\mathrm{VC}}\log\frac{2en}{d_{\mathrm{VC}}}}}{\delta\sqrt{2n}}\right)Z_{S}.
\]
Therefore, we have
\begin{equation*}
    \label{ineq: zs_zd}
    \log Z_{D}\leq\log Z_{S}+\frac{4+\sqrt{d_{\mathrm{VC}}\log\frac{2en}{d_{\mathrm{VC}}}}}{\delta\sqrt{2n}}
\end{equation*}





Combining the Inequalities \ref{eq: ineq} and \ref{ineq: zs_zd}, it follows that:
\begin{align*}
    \KL\Bigg(q_{[\vf]} || p\Big(\vtheta|\gD\Big)\Bigg) &\leq \max_{\|\vf' - \vf\|_{\gH^d} \leq \rho}\KL\Bigg(q_{[\vf']} || p\Big(\vtheta | \gS\Big)\Bigg)\\
         &+ \sqrt{\frac{N' \log \Bigg(1 + \frac{C}{\rho^2 P^2}\Bigg(1+ \sqrt{\frac{\log(N)}{N'}}\Bigg)^2 \Bigg)+4\log\frac{n}{\delta}+8\log(6n+3k)}{n-1}} + \frac{4+\sqrt{d_{\mathrm{VC}}\log\frac{2en}{d_{\mathrm{VC}}}}}{\delta\sqrt{2n}}.
\end{align*}
which concludes our proof.
\end{proof}
\section{Additional Experiments}
\label{appendix: additional experiments}
\subsection{Effect of \#particles}
\label{appendix: number of particles}
\begin{wrapfigure}{r}{0.4\textwidth}
   \vspace{-13mm}
    \includegraphics[width=0.9\linewidth]{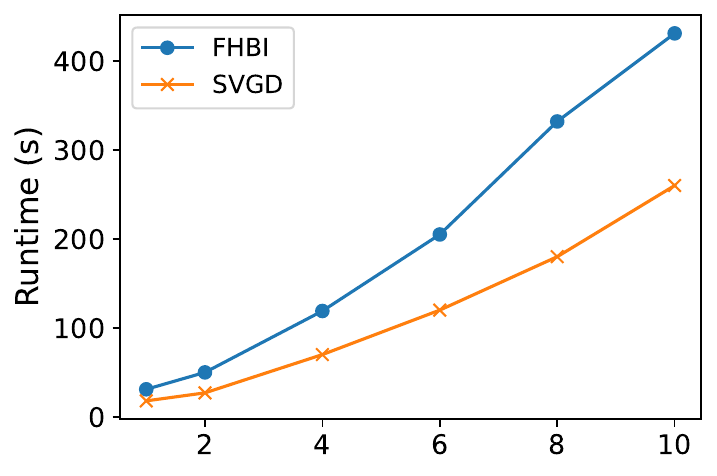}
    \caption{Runtime by $\textsc{\#particles}$.}
    \label{fig:wt_a}
\end{wrapfigure}

To understand the impact of varying the number of particles, we experiment on the seven Natural datasets, reporting both accuracy and per-epoch runtime. We compared FHBI with SVGD and SAM. Figure \ref{fig:wt_a} and Table \ref{table: accuracy by particles} indicate that multiple particles result in significant performance improvements compared to a single particle. However, while increasing the number of particles enhances performance, it introduces a tradeoff regarding runtime and memory required to store the models. Based on these observations, we found that using $\textsc{\#particles} = 4$ provides an optimal balance between performance gains and computational overhead.

\begin{table}[h]
    \centering
    \renewcommand\arraystretch{1.2}
    \begin{tabular}{
>{\columncolor[HTML]{FFFFFF}}l 
>{\columncolor[HTML]{FFFFFF}}c 
>{\columncolor[HTML]{FFFFFF}}c 
>{\columncolor[HTML]{FFFFFF}}c 
>{\columncolor[HTML]{FFFFFF}}c 
>{\columncolor[HTML]{FFFFFF}}c cc}
\hline
\textsc{\#Particles} &
  {CIFAR100} &
  {Caltech101} &
  {DTD} &
  {Flower102} &
  {Pets} &
  {SVHN} &
  {Sun397} \\ \hline
1p (SAM) &
  {\color[HTML]{1F2328} 72.7} &
  {\color[HTML]{1F2328} 90.3} &
  {\color[HTML]{1F2328} 71.4} &
  {\color[HTML]{1F2328} 99.0} &
  {\color[HTML]{1F2328} 90.2} &
  84.4 &
  52.4 \\ \hline
4p  & 74.8          & 93.0          & 74.3 & \textbf{99.4} & 92.4          & 87.5          & 56.5          \\ \hline
10p & \textbf{75.0} & \textbf{93.2} & \textbf{75.0} & 99.1          & \textbf{92.4} & \textbf{87.9} & \textbf{58.3} \\ \hline
\end{tabular}
\caption{Accuracy by $\textsc{\#particles}$.}
    \label{table: accuracy by particles}
\end{table}

\subsection{Effect of kernel choice}
\label{appendix: kernel choices}
The implementation of FHBI relies on the choice of the kernel $k$. In our experiments, we selected the RBF kernel due to its widespread use in the kernel methods literature, known for its strong representational capabilities and its ability to balance underfitting and overfitting through the kernel width parameter $\sigma$. To evaluate the impact of different kernel choices, we tested our method on the four Specialized datasets using the polynomial kernel of degree 10 as a comparison. The results, summarized in Table \ref{table: kernel choice}, indicate that while the polynomial kernel slightly underperforms relative to the RBF kernel, the difference is minimal, with a performance gap of less than 0.3\%.

\begin{table}[h]
\centering
\caption{Classification accuracy on the Specialized datasets with different kernel  choices}
\label{table: kernel choice}
\begin{tabular}{c|cccc|c}
{\color[HTML]{000000} Kernel}            & {\color[HTML]{000000} Camelyon}      & {\color[HTML]{000000} EuroSAT}       & {\color[HTML]{000000} Resisc45}      & {\color[HTML]{000000} Retinopathy}   & {\color[HTML]{000000} AVG}           \\ \hline
{\color[HTML]{000000} RBF}               & {\color[HTML]{000000} \textbf{85.3}} & {\color[HTML]{000000} \textbf{95.0}} & {\color[HTML]{000000} \textbf{87.2}} & {\color[HTML]{000000} \textbf{79.6}} & {\color[HTML]{000000} \textbf{86.8}} \\ \hline
{\color[HTML]{000000} Polynomial (d=10)} & {\color[HTML]{000000} 85.0}          & {\color[HTML]{000000} 94.9}          & {\color[HTML]{000000} 86.8}          & {\color[HTML]{000000} 79.2}          & {\color[HTML]{000000} 86.5}         
\end{tabular}
\end{table}

\subsection{Memory consumption and scalability to larger models}

To assess the scalability to larger models, we conduct an additional experiment on the larger ViT-B/16. In Tables \ref{tab:vram} and \ref{tab:trainingtime}, we report the memory usage and training time for the \texttt{CIFAR-100}, \texttt{Caltech-101}, and \texttt{Patch-Camelyon} datasets.

\begin{table}[h]
\centering
\caption{VRAM consumption (MB) on ViT-B/16 and ViT-L/16}
\label{tab:vram}
\begin{tabular}{c|ccc}
\hline
Architechture & CIFAR-100 & Caltech-101 & Patch-Camelyon \\ \hline
ViT-B/16               & 12541              & 12539                & 12535                   \\
ViT-L/16               & 33426              & 33684                & 32126                   \\ \hline
\end{tabular}
\end{table}

\begin{table}[h]
\centering
\caption{VRAM consumption (MB) on ViT-B/16 and ViT-L/16}
\label{tab:trainingtime}
\begin{tabular}{c|ccc}
\hline
Architechture & CIFAR-100 & Caltech-101 & Patch-Camelyon \\ \hline
ViT-B/16               & 5.55              & 5.66                & 5.46                   \\
ViT-L/16               & 17.45              & 17.35                & 17.02                   \\ \hline
\end{tabular}
\end{table}

\subsection{Sensitivity to the RBF kernel length scale}
As noted before, we initially tune the length scale parameter $\sigma$ from the candidates set $\{0.7, 1, 1.2\}$. To further investigate sensitivity to this hyperparameter, we expand the range to $\{0.1, 0.7, 1, 1.2, 2.5\}$, where we additionally include a small value of $\sigma = 0.1$ and a large value of $\sigma = 2.5$. The results on the \texttt{Natural} datasets, reported in Table \ref{tab:rbflengthsensitivity}, indicate that the performance remains robust within a reasonable range. We also observe a slight degradation when using extremely small values (e.g., $\sigma = 0.1$), where the model tends to overfit, or large values (e.g., $\sigma = 2.5$), where the model tends to underfit.

\begin{table}[h]
\centering
\caption{Sensitivity to the RBF length scale}
\label{tab:rbflengthsensitivity}
\begin{tabular}{c|ccccccc}
\hline
$\sigma$ & CIFAR-100     & Caltech-101   & DTD           & Flowers102    & Pets          & SVHN          & Sun397        \\ \hline
0.1                   & 72.1          & 91.6          & 74.0          & 97.9          & 90.2          & 85.3          & 52.2          \\
0.7                   & 73.8          & 91.8          & 73.3          & 98.7          & 92.4          & 86.7          & 56.1          \\
1                     & 73.6          & 92.7          & 72.7          & \textbf{99.1} & 91.9          & \textbf{87.3} & 54.3          \\
1.2                   & \textbf{74.1} & \textbf{93.0} & \textbf{74.3} & 98.3          & \textbf{92.4} & 86.4          & \textbf{56.5} \\
2.5                   & 69.2          & 90.9          & 69.4          & 97.5          & 90.9          & 84.6          & 52.6          \\ \hline
\end{tabular}
\end{table}

\section{Experimental Details}
\label{appendix: experimental details}
\subsection{Chosen Hyperparameters}
For each experiment, we conducted five runs of FHBI and reported the mean and standard deviation. All Bayesian methods were trained with four particles on the same set of LoRA parameters. We used ten warm-up epochs, batch size 64, the Gaussian kernel, and the cosine annealing learning rate scheduler for all settings. The experiments were run with PyTorch on a Tesla V100 GPU with 40GB of RAM. FHBI involves three hyperparameters: the learning rate $\epsilon$, ascent step size $\rho$, and kernel width $\sigma$. To tune these hyperparameters, we grid-search hyperparameters on the validation set, where the key hyperparameters are: the kernel width $\sigma$, the initial learning rate $\epsilon$, and the ascent step size $\rho$. The candidate sets are formed as $\epsilon \in \{0.15, 1, 1.5, 2.1, 2.5\}, \rho \in \{0.01, 0.03, 0.05\}, \sigma \in \{0.7, 1, 1.2\}$. The chosen hyperparameters are as follows $(\epsilon, \rho, \sigma)$: \texttt{CIFAR100} = $(0.15, 0.03, 1.2)$, \texttt{Caltech101} = $(2.1, 0.05, 1.2)$, \texttt{DTD} = $(0.15, 0.03, 1.2)$, \texttt{Flowers102} = $(0.15, 0.03, 1)$, \texttt{Pets} = $(0.15, 0.03, 1.2)$, \texttt{SVHN} = $(2.5, 0.01, 1)$, \texttt{Sun397} = $(0.15, 0.03, 1.2)$, \texttt{Patch-Camelyon} = $(2.1, 0.05, 1)$, \texttt{DMLab} = $(2.1, 0.03, 1)$, \texttt{EuroSAT} = $(2.5, 0.01, 1.2)$, \texttt{Resisc45} = $(1.5, 0.03, 1.2)$, \texttt{Diabetic- Retinopathy} = $(2.1, 0.03, 1)$, \texttt{Clevr-Count} = $(2.5, 0.01, 1)$, \texttt{Clevr-Dist} = $(1, 0.01, 1.2)$, \texttt{KITTI} = $(2.1, 0.05, 1)$, \texttt{dSprites-loc} = $(2.1, 0.05, 1)$, \texttt{dSprites-ori} = $(2.1, 0.03, 1.2)$, \texttt{smallNorb-azi} = $(1, 0.05, 1)$, \texttt{smallNorb-ele} = $(1, 0.03, 0.7)$.

\subsection{Data Augmentations}
Our implementation is based on the repository \href{https://github.com/synbol/Parameter-Efficient-Transfer-Learning-Benchmark/tree/main}{V-PETL}. Similar to this repository, we use a different data augmentation among the following three augmentations for each dataset. The data augmentations that we used for each setting are: 

\begin{itemize}
    \item For \texttt{CIFAR100, DTD, Flower102, Pets, Sun397}

\begin{lstlisting}[language=Python]
     self.transforms_train = transforms.Compose(
        [
            transforms.RandomResizedCrop(
                (self.size, self.size),
                scale=(self.min_scale, self.max_scale),
            ),
            transforms.RandomHorizontalFlip(self.flip_prob),
            transforms.TrivialAugmentWide()
            if self.use_trivial_aug
            else transforms.RandAugment(self.rand_aug_n, 
                                        self.rand_aug_m),
            transforms.ToTensor(),
            transforms.Normalize(mean=[0.485, 0.456, 0.406], 
                                std=[0.229, 0.224, 0.225])]),
            transforms.RandomErasing(p=self.erase_prob),
        ]
    )
    self.transforms_test = transforms.Compose(
        [
            transforms.Resize(
                (self.size, self.size),
            ),
            transforms.ToTensor(),
            transforms.Normalize(mean=[0.485, 0.456, 0.406], 
                                std=[0.229, 0.224, 0.225])]),
        ]
    )
\end{lstlisting}
\item For \texttt{Caltech101, Clevr-Dist, Dsprites-Loc, Dsprites-Ori, SmallNorb-Azi, SmallNorb-Ele}:

\begin{lstlisting}[language=Python]
    self.transform_train = transforms.Compose([
        transforms.Resize((224, 224)),
        transforms.ToTensor(),
        transforms.Normalize(mean=[0.485, 0.456, 0.406], 
                                std=[0.229, 0.224, 0.225])])
    self.transform_test = transforms.Compose([
        transforms.Resize((224, 224)),
        transforms.ToTensor(),
        transforms.Normalize(mean=[0.485, 0.456, 0.406], 
                                std=[0.229, 0.224, 0.225])])
\end{lstlisting}
\item For \texttt{Clevr-Count, DMLab, EuroSAT, KITTI, Patch Camelyon, Resisc45, SVHN, Diabetic Retinopathy}: 
\begin{lstlisting}
    from timm.data import create_transform
    self.transform_train = create_transform(
                input_size=(224, 224),
                is_training=True,
                color_jitter=0.4,
                auto_augment='rand-m9-mstd0.5-inc1',
                re_prob=0.0,
                re_mode='pixel',
                re_count=1,
                interpolation='bicubic',
            )
    aug_transform.transforms[0] = transforms.Resize((224, 224), 
                                                    interpolation=3)
    self.transform_test = transforms.Compose([
            transforms.Resize((224, 224)),
            transforms.ToTensor(),
            transforms.Normalize(mean=[0.485, 0.456, 0.406], 
                                std=[0.229, 0.224, 0.225])])
\end{lstlisting}
\end{itemize}


\end{document}